\def\eqref#1{equation~\ref{#1}}
\def\1{\bm{1}}
\DeclareMathAlphabet{\mathsfit}{\encodingdefault}{\sfdefault}{m}{sl}
\SetMathAlphabet{\mathsfit}{bold}{\encodingdefault}{\sfdefault}{bx}{n}
\newcommand\numberthis{\addtocounter{equation}{1}\tag{\theequation}}
\renewcommand{\Pr}{\mathbb{P}}
\newtheorem{lemma}{Lemma}
\newtheorem{definition}[lemma]{Definition}
\newtheorem{proposition}[lemma]{Proposition}
\newtheorem{fact}{Fact}
\newtheorem{remark}{Remark}
\newtheorem{failure}{Failure}
\newtheorem{subfailure}{Failure}
\theoremstyle{definition}
\newcommand{\vocab}{\mathcal{V}}
\newcommand{\seq}{\bm{s}}
\newcommand{\seqtoken}{s}
\newcommand{\prefixlen}{L_{\texttt{pref}}}
\newcommand{\prefix}{\bm{p}}
\newcommand{\prefixtoken}{p}
\newcommand{\responselen}{L_{\texttt{resp}}}
\newcommand{\response}{\bm{r}}
\newcommand{\responsetoken}{r}
\newcommand{\distr}{\mathcal{D}}
\newcommand{\goal}{{v}_{\tt{goal}}}
\newcommand{\start}{{v}_{\tt{start}}}
\newcommand{\node}{v}
\newcommand{\asseq}{\texttt{adj}}
\renewcommand{\path}{\bm{r}}
\newcommand{\llm}{\texttt{LM}_{\theta}}
\newcommand{\llmprob}{{\texttt {LM}}_{\theta}}
\newcommand{\llmnoparam}{\texttt{LM}}
\newcommand{\trueprob}{{\Pr}_{\distr}} %
\newcommand{\ntpobj}{\mathcal{J}_{\texttt{next-token}}}
\newcommand{\foresightobj}{\mathcal{J}_{\texttt{t-less}}}
\newcommand\agsample{\stackrel{\mathclap{\normalfont\mbox{{\tiny \tt ag}}}}{\sim}}
\newcommand\foresightsample{\stackrel{\mathclap{\normalfont\mbox{{\tiny \tt \$}}}}{\sim}}
\newcommand{\agacc}{\texttt{Acc}_{\texttt{ag}}}
\newcommand{\cheatacc}{\texttt{Acc}_{\texttt{cheat}}}
\newcommand{\reverseacc}{\texttt{Acc}_{\texttt{rev}}}
\newcommand{\dollaracc}{\texttt{Acc}_{\texttt{\$}}}
\newcommand{\firstacc}{\texttt{Acc}_{\texttt{1st}}}
\newcommand{\snowballacc}
{\texttt{Acc}_{\texttt{sb}}}
\newcommand{\outpath}{\texttt{path}}
\newif\ifdraft
\icmltitlerunning{The Pitfalls of Next-Token Prediction}
\begin{document}

\twocolumn[
\icmltitle{The Pitfalls of Next-Token Prediction}

\icmlsetsymbol{equal}{*}

\begin{icmlauthorlist}
\icmlauthor{Gregor Bachmann}{equal,yyy}
\icmlauthor{Vaishnavh Nagarajan}{equal,goog}
\end{icmlauthorlist}

\icmlaffiliation{yyy}{ETH Zürich, Switzerland}
\icmlaffiliation{goog}{Google Research, US}

\icmlcorrespondingauthor{Gregor Bachmann}{gregorb@ethz.ch}
\icmlcorrespondingauthor{Vaishnavh Nagarajan}{vaishnavh@google.com}

\icmlkeywords{Machine Learning, ICML}

\vskip 0.3in
]

\printAffiliationsAndNotice{\icmlEqualContribution} %

\begin{abstract}
Can a mere next-token predictor faithfully model human intelligence? %
We crystallize this emerging concern
{and correct popular misconceptions surrounding it, and advocate a simple multi-token objective}. As a starting point, we argue that the two often-conflated phases of next-token prediction --- autoregressive inference  and teacher-forced training --- must be treated distinctly.
The popular criticism that errors can compound during autoregressive inference, crucially assumes that teacher-forcing has learned an accurate next-token predictor. This assumption sidesteps a more deep-rooted problem we expose: in certain classes of tasks, teacher-forcing can simply fail to learn an accurate next-token predictor in the first place. %
We describe a general mechanism of how teacher-forcing can fail, and design a minimal planning task where both the Transformer and the Mamba architecture empirically fail in that manner --- remarkably, despite the task being straightforward to learn. 
Finally, we provide preliminary evidence that this failure can be resolved using \textit{teacherless} training, a simple modification using dummy tokens that predicts multiple tokens in advance. 
We hope this finding can ground future debates and inspire explorations beyond the next-token prediction paradigm. We make our code available under 
\url{https://github.com/gregorbachmann/Next-Token-Failures}

\end{abstract}

\section{Introduction}
Long after its inception in the seminal work of \citet{shannon48ntp,shannon51english}, next-token prediction has made its way into the core of the modern language model. But despite its long list of achievements, there is a small but growing belief that a next-token predicting model is merely an impressive \textit{improv} artist that cannot truly model human thought. 
Humans, when navigating the world,
 meticulously imagine, curate and backtrack plans in their heads before executing them. Such strategies are unfortunately not explicitly built into the backbone of the present-day language model. This criticism has been floating around as an informal viewpoint  \citep{lecun24ar,bubeck23sparks}. Our paper is aimed at crystallizing this intuitive criticism, clarifying popular misconceptions, and developing new core arguments for the next-token prediction debate.  %

Let us start by making more precise, what it means to say that human-generated language, or problem-solving, does not follow next-token prediction. When formalizing this, we hit an immediate roadblock: isn’t every sequence generation task possible autoregressively? Put differently, an optimist would say, every distribution over a sequence of tokens can be captured by an appropriately sophisticated next-token predictor simulating the chain rule of probability i.e., $\Pr(\responsetoken_1, \responsetoken_2, \hdots ) = \prod_{i} \Pr(\responsetoken_i \vert \responsetoken_1 \hdots \responsetoken_{i-1}) $. Thus, the autoregressivity in our systems is not antithetical to learning human language, after all.

Although this argument is compelling, a pessimist would worry, realistically, even with minor imperfections in the next-token predictor, the accuracy may fall spectacularly for long sequences \citep{kaariainen2006lower,ross10efficient,lecun24ar,dziri23faith}. Say, even if every next-token error is as little as $0.01$, the probability of encountering an erroneous token exponentially compounds along the way, and by the end of $200$ tokens, blows up to $0.86$. %

This is a simple and powerful observation. Yet, we explain why this does not completely capture the intuition that next-token predictors may be poor planners. Crucially, this argument does not carefully distinguish between the two types of next-token prediction: inference-time autoregression (where the model consumes its own previous outputs as inputs), and training-time {\em teacher-forcing} \citep{williams89tf}  (where the model is taught to predict token-by-token consuming all previous ground truth tokens as inputs). Framed this way, the  compounding of errors only pinpoints a superficial failure to execute a plan during \textit{inference}. %
It leaves open the possibility that we may have still learned a near-perfect next-token predictor; perhaps, with an appropriate post-hoc wrapper that verifies and backtracks, we can elicit the right plan without compounding errors.

Drawing this distinction allows us to articulate a much more concerning possibility: is it safe to assume that next-token based  learning (teacher-forcing) always learns an accurate next-token predictor? We identify this is not always the case.  Consider a task where we expect the model to witness a problem statement $\prefix = (\prefixtoken_1, \prefixtoken_2 \hdots, )$ and produce the ground truth response tokens $(\responsetoken_1, \responsetoken_2, \hdots)$.  Teacher-forcing 
trains the model to produce each token $\responsetoken_i$ by not only providing the problem statement $\prefix$ but also by revealing part of the ground truth $\responsetoken_1, \hdots \responsetoken_{i-1}$. Depending on the task, we first argue that this can induce shortcuts that use the revealed prefix of the ground truth answer to spuriously fit future answer tokens. We call this the \textit{Clever Hans cheat}. \footnote{{\em Clever Hans} \citep{bhlitem116908} was a famous show horse that could solve simple arithmetic tasks %
    by repeatedly tapping with his hoof until he reached the correct count. It turns out however, Clever Hans did not really solve the 
    problem, but merely
    stopped tapping upon detecting certain (involuntary) cues from his coach. %
    Clever Hans' answers were wrong when the coach was absent.}
Next, while the later tokens ($\responsetoken_i$ for large $i$) become easy to fit by the Clever Hans cheat, in contrast, the earlier answer tokens (say, $r_0, r_1$ etc.,) become harder to learn. This is because they no longer come with any supervision about the full answer --- part of the supervision is lost to the Clever Hans cheat. 
We argue that these two flaws would together arise in ``lookahead tasks'': tasks that require implicitly planning a later token in advance of an earlier token. In such tasks, teacher-forcing would result in a highly inaccurate next-token predictor that would struggle to generalize to unseen problems $\prefix$, even those sampled in-distribution.

Empirically, we demonstrate that the above mechanism leads to complete in-distribution failure in a path-finding setup on a graph, that we propose as a minimal lookahead task. We design our setup in a way that it is demonstrably straightforward to solve, implying that the failure of any model is remarkable. Yet, we observe  failure for both the 
Transformer \citep{vaswani17attention} and the Mamba architecture, a structured state space model \citep{gu2023mamba}. We then point towards
a very simple multi-token modification called  \textit{teacherless} training  ---  an idea that has appeared in other contexts \citep{tschannen23parallel,monea23pass,zhao23act} --- which predicts multiple future tokens and is able to circumvent
this failure in some settings.  Thus, we pinpoint a precise and easy-to-learn scenario where, rather than properties that are criticized in existing literature --- like convolution or recurrence or autoregressive inference (see \S\ref{sec:related}), --- it is next-token prediction during training that is at fault.

 We hope that these findings inspire and set future debates around next-token prediction on solid ground. In particular, we believe that the failure of the next-token prediction objective on our straightforward task casts a shadow over its promise on more complex tasks (such as say, learning to write stories). We also hope that this minimal example of failure and the positive results on teacherless training can motivate alternative paradigms of training.

We summarize our contributions below. 

\begin{enumerate}[leftmargin=1.25em,itemsep=0mm]
    \item We consolidate existing critiques against next-token prediction and crystallize new core points of contention (\S\ref{sec:related} and \S\ref{sec:intro-args}, \S\ref{sec:cleverhans}). 

    \item We identify that the next-token prediction debate must not conflate autoregressive inference with teacher-forcing. Both lead to vastly different failures (\S\ref{sec:intro-args},\S\ref{sec:diff}).

    \item We conceptually argue that in lookahead tasks, next-token prediction during training (i.e., teacher-forcing) can give rise to problematic learning mechanisms that are detrimental to even in-distribution performance (\S\ref{sec:cleverhans}). 
    
    \item We design a minimal lookahead task (\S\ref{sec:minimal-task}). 
     We empirically demonstrate the failure of teacher-forcing for the Transformer and Mamba architectures, despite the task being easy to learn (\S\ref{sec:verify}). 

    \item We identify that a teacherless form of training that predicts multiple future tokens at once --- proposed in \citet{monea23pass} for orthogonal inference-time efficiency goals --- shows promise in circumventing these training-time failures in some settings (\S\ref{sec:verify}, Eq~\ref{eq:foresight-obj}). This further demonstrates the limits of next-token prediction.

\end{enumerate}

\section{The Two Modes of Next-Token Prediction}

\label{sec:ntp}

Consider a set of tokens $\vocab$. Let $\distr$ be a ground truth distribution over sequences  that consist of a prefix  $\prefix$ and a response $\response$, denoted as $\seq = \prefix, \response$ where $\prefix = (\prefixtoken_1, \prefixtoken_2, \hdots, ) \in \vocab^{\prefixlen}$ and $\response = (\responsetoken_1, \responsetoken_2, \hdots) \in \vocab^{\responselen}$. 
We assume sequences of fixed length merely for simplicity. %

For any sequence $\seq$, let $\seq_{< i}$ denote the first $i-1$ tokens of $\seq$, and $\seq_{i<}$ the tokens following the $i$th token. Note that $\seq_{< 1}$ is the empty prefix. 
With an abuse of notation, let $\trueprob({\seqtoken}_i | \seq_{< i})$ denote the ground truth probability mass on ${\seqtoken}_{i}$ being the $i$th token given the prefix $\seq_{< i}$.
Consider a next-token-predicting language model $\llm$ (with parameters $\theta$) such that $\llmprob(\hat{\seqtoken}_{i} = \seqtoken_i ; \seq_{< i})$ is the probability that the model assigns to the $i$th output $\hat{\seqtoken}_i$ taking the value $\seqtoken_i$, given as {input} the sequence $\seq_{< i}$. Note that the next-token predictor only defines the probability for a single future token given an input, but not the joint probability of multiple future tokens. This joint probability is axiomatically defined analagous to the chain rule of probability: 
\begin{equation}
    \llmprob(\hat{\response}=\response \; ;  \prefix) \coloneq \prod_{i=1}^{\responselen} \llmprob\left( \hat{\responsetoken}_i = {\responsetoken}_i ; \prefix, \response_{<i} \right)
\end{equation}
where $\hat{\response}=\response$ denotes an exact token-by-token match.

 To train the above model, %
two distinct types of next-token prediction are used. First, during inference, for a given prefix, we autoregressively sample from the model token-by-token, providing as input the prefix and all previously-generated tokens. Formally,

\begin{definition} \textbf{(Inference-time next-token prediction via autoregression)} Autoregressive inference is a form of inference-time next-token prediction in that to generate a response $\hat{\response}$, we iterate over $i=1, \hdots, \responselen$, to sample the next token $\hat{\responsetoken}_i$  with the distribution given by  $\llmprob(\hat{\responsetoken}_i \; ; \prefix, \hat{\response}_{<i})$. We denote this as $\hat{\response}  \agsample \llmprob(\cdot \; ; \prefix)$.   
\end{definition}

There is also a second phase of next-token prediction, one that is applied during the training process, called \textit{teacher-forcing}. Here, instead of feeding the model its own output back as input, the model is fed with prefixes of the \textit{ground truth} response $\response_{<i}$. Meanwhile, the model is assigned as supervisory target, $r_{i}$, the next ground truth token. Then, the model maximizes a sum of next-token log-probabilities:

\begin{definition}(\textbf{Training-time next-token prediction via teacher-forcing}) Teacher-forced training is a form of training-time next-token prediction in that we find parameters $\theta$ that maximize the next-token log-probability sum:
    \begin{align*}
    \ntpobj(\theta) &= \mathbb{E}_{(\prefix,\response) \sim \distr}\left[ \log \llmprob \left( \hat{\response} = {\response} \; ; \prefix \right) \right] \\
    &= \mathbb{E}_{\distr}\Big[ \sum_{i=1}^{\responselen} \log \llmprob \left( \hat{\responsetoken}_i = {\responsetoken}_i ; \prefix, \response_{<i} \right) \Big]
\numberthis    \label{eq:ntp-obj}
\end{align*}
\end{definition}

The key property of the objective is that we extract the model's output, {\em allowing the model access to the ground truth response preceding the current token}. This property will be crucial to the failure we describe in \S\ref{sec:cleverhans}.

 \section{Failure due to Auto-Regressive Inference}
\label{sec:intro-args}
A broad criticism against next-token predictors is that intuitively these models are not explicitly designed to plan ahead, and during inference, they do not know how to recover from their own errors. This discourse has been fragmented in literature. Furthermore, the umbrella term ``next-token prediction'' is used interchangeably with ``autoregressive architecture''. %
Our goal is to analyze these intuitions more systematically, and be careful about distinguishing between the two phases of next-token prediction: teacher-forcing and autoregression.
A key insight we will arrive at is that existing arguments capture only a part of the intuitive concern that next-token predictors may not be able to plan.

\textbf{The chain-rule-of-probability defense:} We first outline what is arguably the most tempting defense for next-token prediction: the chain rule of probability always promises us a next-token predictor that can fit our distribution.

\begin{fact} \textbf{\textit{(Every sequence distribution can be represented by a next-token predictor)}} By the chain rule of probability we have $\trueprob(\response \; | \; \prefix) = \prod_{i=1}^{\responselen} \trueprob(\responsetoken_i \; | \; \prefix, \response_{< i})$. Therefore, define a next-token predictor $\llmnoparam$ such that for every valid value of $i, \prefix$, and $\response$, we have $\llmnoparam (\hat{\responsetoken}_i = \responsetoken_i \; ; \prefix, \response_{< i} ) \coloneq \trueprob(\responsetoken_i | \prefix, \response{< i})$.  Then, sampling $\response \sim \distr | \prefix$, is equivalent to autoregressively sampling $\response \agsample \llmnoparam(\cdot \; ; \prefix) $.
\label{fact:crp}
\end{fact}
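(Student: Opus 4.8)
The plan is to reduce the statement to the chain rule of probability and then verify, by a short induction on the response length, that autoregressive sampling from $\llmnoparam$ reproduces the conditional law of $\distr$ given $\prefix$. In particular, there are really two things to check: that the product formula defining $\llmnoparam(\hat{\response} = \response \; ; \prefix)$ equals $\trueprob(\response \mid \prefix)$, and that running the autoregressive \emph{sampler} (which feeds back its own draws) induces this same distribution.

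First I would establish the decomposition $\trueprob(\response \mid \prefix) = \prod_{i=1}^{\responselen} \trueprob(\responsetoken_i \mid \prefix, \response_{<i})$ by iterated application of the definition of conditional probability, $\trueprob(A, B) = \trueprob(A)\,\trueprob(B \mid A)$, peeling off one response token at a time starting from $\trueprob(\responsetoken_1, \dots, \responsetoken_{\responselen} \mid \prefix)$. The only point requiring care is the treatment of conditioning events $(\prefix, \response_{<i})$ that carry zero mass under $\distr$: on such events the conditional is defined arbitrarily, but since these partial sequences are never realized with positive probability, they affect neither side of the identity.

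Second I would substitute the defining choice $\llmnoparam(\hat{\responsetoken}_i = \responsetoken_i \; ; \prefix, \response_{<i}) \coloneq \trueprob(\responsetoken_i \mid \prefix, \response_{<i})$ into the model's axiomatic joint (the chain-rule-style product that defines $\llmnoparam(\hat{\response} = \response \; ; \prefix)$), which together with the first step gives $\llmnoparam(\hat{\response} = \response \; ; \prefix) = \trueprob(\response \mid \prefix)$ for every $\prefix$ and $\response$. To then obtain the equivalence with the autoregressive \emph{procedure}, I would induct on $i$ to show that, conditioned on $\prefix$, the partial output $\hat{\response}_{<i+1}$ produced after $i$ sampling steps has the same law as $\response_{<i+1}$ under $\distr$: the base case is the empty prefix, and for the step the $i$-th token is drawn from $\llmnoparam(\cdot \; ; \prefix, \hat{\response}_{<i}) = \trueprob(\cdot \mid \prefix, \hat{\response}_{<i})$, so composing this transition kernel with the inductive hypothesis yields exactly the joint law of $\response_{<i+1}$. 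Taking $i = \responselen$ finishes the argument.

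I expect no substantive obstacle: the statement is essentially a repackaging of the chain rule, and the proof is a few lines. The only things worth making explicit are the null-event bookkeeping in the first step and the observation that ``autoregressive sampling coincides with direct sampling'' is itself a one-line induction rather than an immediate corollary of the product identity, precisely because the autoregressive sampler conditions on its own previous draws rather than on ground-truth tokens.
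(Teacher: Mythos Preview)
Your proposal is correct and follows the same reasoning as the paper, namely a direct appeal to the chain rule of probability. The paper itself does not supply a separate proof: it treats the fact as self-evident, with the entire justification embedded in the statement (``By the chain rule of probability we have \ldots\ Therefore, define \ldots\ Then, sampling \ldots\ is equivalent to \ldots''). Your write-up is more careful than the paper's on two points the paper leaves implicit: the handling of zero-probability prefixes, and the inductive argument that the autoregressive \emph{procedure} (which conditions on its own draws) realizes the product law; both are fine additions but not strictly needed for what the paper is claiming.
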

The cleverness of this argument lies in the fact that it can apply to \textit{any} imaginable distribution.  Thus, as long as the next-token predictor is sufficiently expressive (by scaling up the context, memory and compute), it can model both natural language and problem-solving. Thus, it may seem that next-token predictors are not antithetical to planning-based tasks, after all.

\textbf{The snowballing errors criticism}:
A skeptic would however raise the following concern. Regardless of the abundance of computational resources, realistic models %
may still predict the next token with a slight probability of error in each step; 
these error probabilities may then exponentially accumulate over time.
This has been formalized in various contexts, from that of autoregressive models \citet{lecun24ar}, to that of the limits of Transformers in compositional tasks \citet{dziri23faith}, and in a different form, in much earlier work in imitation learning and structured prediction \citet{kaariainen2006lower,ross10efficient} (see \S\ref{sec:related}). 
We present a minimal formalization of this below: %

\begin{failure} \textbf{(Snowballing error due to autoregressive inference)}
Consider a model $\llm$, prefix $\prefix$ and a unique ground truth response $\response$ such that the next-token error obeys 
\begin{equation} \forall i \leq \responselen, \; \llmprob \left( \hat{\responsetoken}_i \neq {\responsetoken}_i  ; \prefix, \response_{<i} \right) \approx \epsilon. \label{eq:snowball-condition} \end{equation}
Then, for $\hat{\response} \agsample \llm(\cdot \; ; \prefix) $ the probability that the generated response exactly matches the ground truth  $\response$ obeys
\[ 
\mathbb{P}(\hat{\response} = \response) 
 \approx (1-\epsilon)^{\responselen}. %
\]
\label{fail:snowball}
\end{failure} 
\vspace{-2em}
We argue that the snowball failure mode only indicates how an autoregressive model can fail to \textit{execute} a plan during inference-time. It does not  preclude the possibility that the model may have \textit{learned} a good plan that it simply fails to execute during inference. Concretely, it may still be possible that, at each step, the model has high accuracy of predicting a  next token that is consistent with a good plan (as assumed in Eq~\ref{eq:snowball-condition}). Depending on the setting, one can potentially exploit this accuracy to elicit a good plan during inference. For instance, one may be able to use a post-hoc wrapper that verifies whether an error has taken place, then backtracks and executes a different action. One may even simulate backtracking using more elaborate techniques such as tree-of-thought \citep{wei22cot,yao23tree,besta23graph,yao23react}, or using the model to give itself feedback \citep{madan23selfrefine,huang22internal,shinn2023reflexion} to elicit the plan that the model has learned.

Thus, the snowball failure mode captures what is primarily a shortcoming of an autoregressive architecture. Likewise, the chain-rule-of-probability defense captures only the expressive power of an autoregressive architecture. Neither of these arguments address the possibility that learning with next-token prediction may itself have shortcomings in learning how to plan. In this sense, we argue that existing arguments capture only a part of the intuitive concern that next-token predictors fare poorly at planning.

\section{Failure due to Teacher-Forcing}
\label{sec:cleverhans}

Can a model trained to predict the next token, fail to predict the next token with high accuracy during test-time? Mathematically, this would mean showing that a model trained with the teacher-forcing objective of Eq~\ref{eq:ntp-obj} has high next-token prediction error on the very distribution it was trained on (thus breaking the assumption in Eq~\ref{eq:snowball-condition} of the snowballing failure mode). Consequently, no post-hoc wrapper can salvage a plan out of the model. The goal of this section is to conceptually argue that this failure can happen for lookahead tasks: tasks that implicitly require computing a future token in advance before an earlier token. %

As a running example for our argument, we design a path-finding problem on a simple class of graphs. We view this example as a minimal setting that captures the core essence of what it means to solve problems with lookahead, %
without irrelevant confounding factors. This task is also demonstrably straightforward to solve, as we will see, thus making any observed failures remarkable. Thus we view this running example as a template for an intuitive argument that can be made about teacher-forced models on more general and harder problems that require lookahead.

\begin{figure}
    \centering
\includegraphics[width=0.48\textwidth]{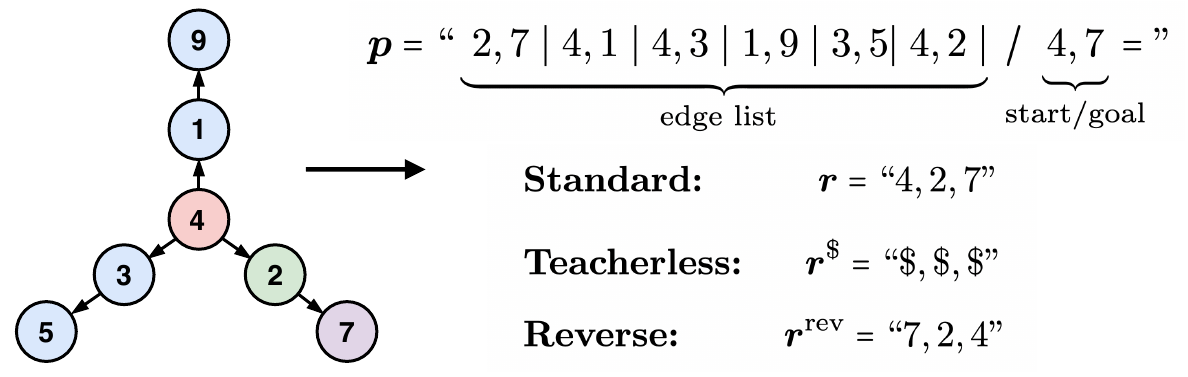}
    \caption{Illustration of a path-star graph. The prefix $\prefix$ represents the adjacency list and the (central) start and goal node. The target is represented by $\response$. Under ``standard'' teacher-forcing, we condition the model on prefixes of $\response$ to predict $\response$. But in \S\ref{sec:verify} we explore alternatives where we train without a teacher (condition on $\response^{\$}$ and predict $\response$) or train with a reversal (condition on and predict $\response^{\texttt{rev}}$).}
    \label{fig:graph-seq} 
\end{figure}

\subsection{Path-Finding on Path-Star Graphs: A Minimal and Easy Lookahead Task}
\label{sec:minimal-task}
Consider a path-finding problem on a directed graph $G$ with a set of nodes $\{\start, \goal, \node_1, \node_2, \hdots \}$.  The graph is a ``path-star'' graph with $\start$ as the central node, with at least $2$ paths (each of length $l \geq 2$ edges) emanating from it, with a unique path ending in $\goal$. The task is to find a path from $\start$ to $\goal$. Correspondingly, we assume that the distribution $\distr$ is over sequences where the prefix $\prefix$ represents a (randomly generated) graph, and the response represents the path from the start to the goal. In particular, we sample a graph $G$ which is represented as an adjacency list as $\asseq(G) = e_1, e_2, \hdots$ where each edge $e=(\node, \node')$ is represented such that $\node'$ farther away from $\start$ than $\node$. We then set the prefix as $\prefix = (\asseq(G), \start, \goal)$ so the model knows what the graph, and the desired start and goal states are. The ground truth response $\response$ corresponds to the sequence of vertices $\response = \start, \hdots \goal$ on the start-to-goal path.
We visualize this construction in Fig.~\ref{fig:graph-seq}.

\textbf{The straightforward lookahead solution.} Ideally, we want the model to learn a mapping from the input $\prefix$ consisting \textit{only} of $(\asseq(G),\start, \goal)$ to an output that is the full path $\path$. %
Two such solutions are possible. One idea is to {plan} by examining all the paths emanating from $\start$ and choosing the one that ends at $\goal$. But a second, straightforward solution exists: the model simply needs to {look ahead} at the sequence ``right-to-left'' and observe that it corresponds to the one unique path starting from $\goal$ and ending at $\start$. After internally computing the path from $\goal$ and reversing it, the model can emit its response.

\subsection{Outline of Failure Mechanism}

While we will use the path-star example as a running example, we make our claim more generally for problems that require lookahead (such as story-writing, as we will discuss later). %
 In such problems, we claim that teacher-forcing prevents learning the true mechanisms, causing failure. Intuitively, in teacher-forcing, we decompose the learning of $\prefix \to \response$ into multiple problems, one for each token $\responsetoken_i$. Specifically, we make the model learn a mapping from the input $(\prefix, \path_{< i})$ --- {not just}  $\prefix$ --- to the output $\responsetoken_i$.
The additional information $ \path_{< i}$ in the input, we argue, is problematic and destroys the core challenge in what the model has to learn. 
Specifically, our argument puts forth two debilitating mechanisms that would together emerge under teacher-forcing  (explained over the next two subsections). While, we will empirically verify these mechanisms for path-star graphs in \S\ref{sec:verify}, we also provide a discussion of how our ideas apply to a text-based scenario at the end of this section.

\begin{figure}
    \centering
    \includegraphics[width=0.5\textwidth]{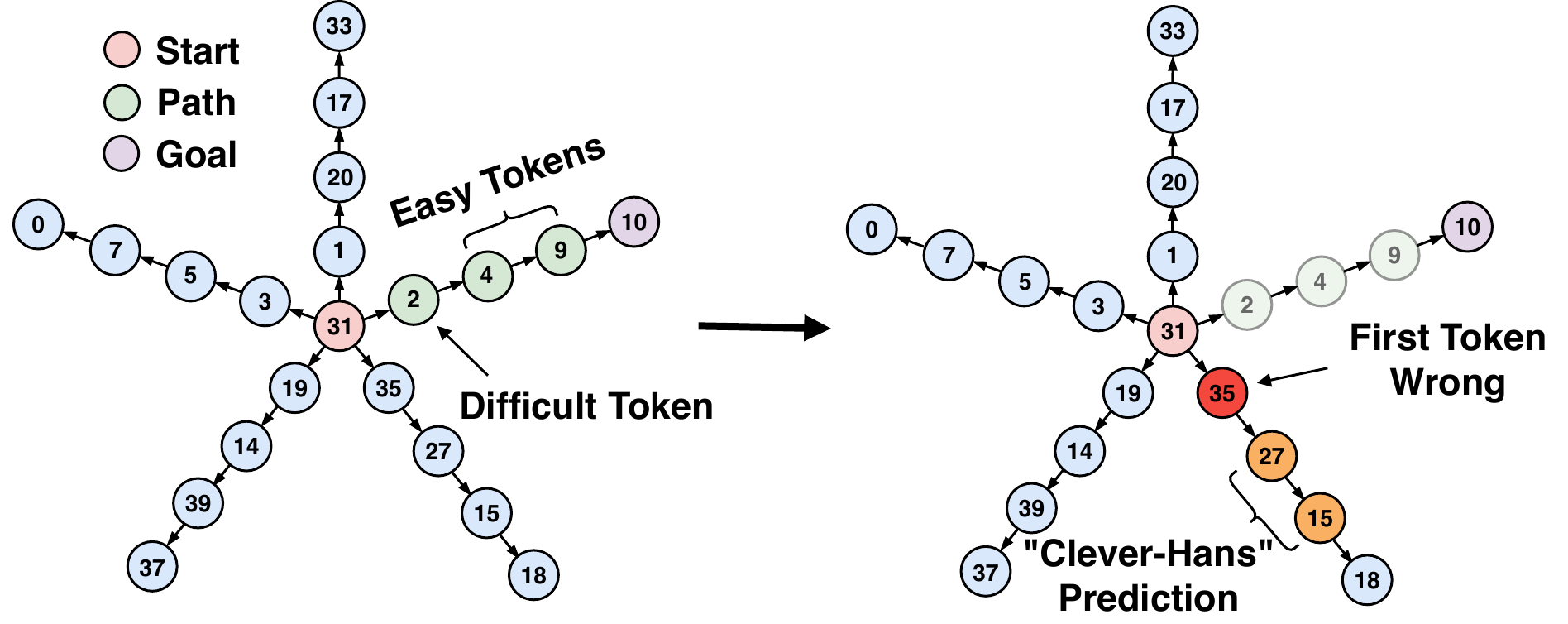}
    \caption{Illustration of the failure of teacher-forcing on a path-star graph. The left image marks the ``easy tokens'' which can be fit by the Clever Hans cheat (Failure~\ref{fail:clever}), while the ``difficult token'' cannot be learned (Failure~\ref{fail:diff-token}) due to lost supervision. The right image shows how the model would behave during autoregressive inference, under the absence of the ``teacher''.}
    \label{fig:star-graph}
\end{figure}

\subsection{The Clever Hans Cheat}

First, and most importantly, by revealing parts of the answer to the model as input, we allow the model to fit the data by \textit{cheating} i.e., by using trivial mechanisms that use the extra information in $\response_{<i}$ to produce $\responsetoken_i$. Such cheats must especially be abundant for the later tokens (large $i$) for which a larger prefix is revealed.

To illustrate this in our path-star example, without loss of generality, consider a ground truth path that is of the form $\response = \start, \node_1, \node_2, \hdots, \goal$. With a slight abuse of the indexing notation, let $\response_{< i} = \start, \node_1, \hdots, \node_{i-1}$ be the prefix of length $i$ (so we index from $0$ instead of $1$). Observe that nodes from $\node_2$ onwards, until before $\goal$, have precisely one edge  going ``away'' from $\start$. 
Thus, consider when the model is given as input, $(\prefix, \response_{<i})$ where $\prefix = (\asseq(G), \start, \goal)$, to fit the target $\node_{i}$. The model first merely needs to  scan the adjacency list $\asseq(G)$ within $\prefix$ for the one edge containing $\node_{i-1}$ in the first position. Then, the model only has to predict the other node on that edge as $\node_{i}$.  Note though, this cheat  cannot work on fitting the target $\node_1$ given the input $\response_{<1} = \start$ since $\start$ has many outward edges --- we will scrutinize this node in the next section. We illustrate this difference between $\node_1$ and the remaining tokens as the ``easy'' vs. ``difficult'' tokens in Fig.~\ref{fig:star-graph}. 

Crucially, the above cheating mechanism for fitting the easy tokens does not require any lookahead. It is simple, and implementable by an induction head-like module \citep{olsson22icl}. Then, given the well-known simplicity bias of neural networks \citep{shah20simplicity}, we hypothesize that
the later tokens will be quickly fit and ignored during training. This destroys any gradient signal (a.k.a gradient starvation \citep{pezeshki21starvation}) to efficiently learn the ``right-to-left'' solution --- the solution that requires looking at all the tokens in $\response$, and learning that they are simply the unique path from $\goal$ spelled in reverse.

We emphasize two key aspects of this cheating behavior. First, these shortcuts are unlike well-known shortcuts (see Remark~\ref{rem:shortcuts}) that map
 from the original input prefixes $\prefix$  to the ground truth $\response$. 
 What we identify is unique to the mapping from the teacher-forced prefix $(\prefix, \response_{<i})$ to $\responsetoken_i$. Hence, 
 we name this {\em Clever Hans cheating}.
Another notable point is that this does not come from a dearth of samples: even if we had infinite training data at our disposal, the model can still fit the easy tokens of all that data by Clever Hans cheating.

\subsection{The Indecipherable Token}

Perhaps, not all is lost. While the later tokens may be fit using the Clever Hans cheat, we may still have some of the earlier tokens (for small $i$), for which such cheats may be unavailable. The supervision from these tokens may eventually coerce the model into learning the true solution.  For example, in the path-star task, the model still needs to learn to predict the first node $\node_{1}$, where it is not possible to fit the training data by the Clever Hans cheat. If not memorize this token on the data, the most general way to fit this token is by actually solving the underlying task.

However, we argue that it is significantly harder for the model to learn the correct solution now. Consider the moment in training when the Clever Hans cheat is perfected. At this point, the model is {deprived} of information about much of the full solution which was once present as supervisory targets. The model is simply left with the task of mapping the input $\prefix$ to an \textit{incomplete} solution (e.g., the first vertex $\node_1$ in the path-star graph). 
Recovering the plan in this scenario must first of all be relatively harder from a statistical point of view due to the incomplete supervision. But more importantly, learning this task may become computationally harder, or even intractable under certain assumptions. We provide an informal intuition using the path-star problem below, but this intuition should extend to more general problems as well --- indeed, \citet{weis23subtask} and other literature on chain-of-thought echo similar negative results about learning from limited supervision (see \S\ref{sec:more-related-work}).

Intuitively, our learner has to find an end-to-end algorithm that composes multiple subroutines. For instance, the straightfoward solution consists of $l$ steps: start from the current vertex as $\goal$, and find the preceding vertex in the graph in each subsequent step. Each vertex in this path can be thought of as ``intermediate supervision'' to learn a corresponding ``find-the-adjacent-vertex'' subroutine from a space of candidate subroutines.\footnote{As an illustration of what these candidate subroutines could be, imagine that the model can implement an induction head \citep{olsson22icl} $\texttt{Ind}_k(\prefix, \node)$ that finds $\node$ in the adjacency list of $\prefix$, and outputs the token that precedes it by $k$ positions. Then the candidate space could be parameterized by $k$ as $\{\texttt{Ind}_k(\prefix, \node) | k=1, 2, \hdots, \}$. For our specific tokenization, the correct subroutine at each of the $l$ steps is the induction head for which $k=2$.} Even if we conservatively assume that there is only a constant-sized space of candidate subroutines $\mathcal{C}$, the end-to-end search space is an exponential space of $|\mathcal{C}|^l$ algorithms composing $l$ subroutines. 

Now, after the Clever Hans cheat is in effect, the only supervision for this search is the single-token loss, $-\log \llmprob \left( \hat{\responsetoken}_1 = {\responsetoken}_1 ; \prefix \right)$. However, this loss is an ``all-or-nothing'' loss. Crucially, by the \textit{discrete} nature of the task, even if one subroutine is incorrect, the final answer $\hat{\responsetoken}_1$ would likely be incorrect on all inputs. For instance, imagine that the first subroutine is incorrect and its output takes us to an arbitrary location on the graph. Then, even if all subsequent subroutines were correct (i.e., they are ``find-the-adjacent-vertex'' subroutines), the final output would be arbitrary. Thus, we have $\hat{\responsetoken}_1 = {\responsetoken}_1$ precisely for the  algorithm where all $l$ subroutines are correct, and $\hat{\responsetoken}_1 \neq {\responsetoken}_1$ for any other choice of the algorithm. For such an all-or-nothing loss surface, the end-to-end learner must necessarily brute-force search the exponential space of algorithms. 
 We encapsulate the overall claim more generally below:

\begin{proposition}
    \label{prop:indecipherable}
    Let $\mathcal{C}$ be a set of \textit{discrete}-output candidate subroutines.
    Consider learning a task such that (i) it requires composing some $l$ subroutines from $\mathcal{C}$, (ii) the $k$ leading response tokens are sensitive in that even if one subroutine is altered, the first $k$ tokens are each completely altered. %
Then learning the task with only supervision from the first $k$ ground truth tokens requires exponential time of $\Omega(|\mathcal{C}|^l)$, given the assumptions listed in Remark~\ref{rem:indecipherable}. 
\end{proposition}

In \S\ref{sec:verify}, we will verify experiments to demonstrate that our models indeed fail to learn the Indecipherable Token as a result of Clever Hans cheating, and that conversely, they succeed whenever the Clever Hans cheat is prevented.

\subsection{Beyond the Path-Star Setting} 

Framing our argument more generally, and informally, we argue that teacher-forcing can suffer the following failures in order, especially in tasks that require advance lookahead.

\setcounter{failure}{2}
\begin{subfailure}(\textbf{Clever Hans cheating due to teacher-forcing}) Although there is a true mechanism that recovers each $\responsetoken_i$ from the original prefix $\prefix$, there may be multiple other mechanisms that can recover each token $\responsetoken_i$ from the teacher-forced prefix $(\prefix, \response_{< i})$. These mechanisms may be simpler thus disincentivizing the model from learning the true mechanism. 
\label{fail:clever}
\end{subfailure}

\begin{subfailure}(\textbf{Indecipherable token due to lost supervision}) After the Clever Hans cheat is perfected during training, the model is deprived of a part of the supervision (especially, $\responsetoken_i$ for larger $i$). This makes it harder and potentially even intractable to learn the true mechanism from the remaining tokens alone. 
\label{fail:diff-token}
\end{subfailure}

As we demonstrate in the next section, the above failures can cause the model to fail on the very distribution it was trained on. This breakdown of planning abilities emerges right from training, and is orthogonal to the Snowballing Failure that is primarily an inference-time issue (See \S\ref{sec:diff}).

While the path-star problem provides a concrete, verifiable setting of this failure, it can also help us speculate how such failures could occur in more complex and nebulous  tasks. More generally, we expect this failure to occur when there are right-to-left dependencies i.e., a later-appearing token must be planned before an earlier-appearing token. We provide an example below.

\textbf{Story-writing.} Imagine training on novels that take the form of a conflict, followed by a backstory, followed by a resolution of the conflict,  utilizing the backstory. %
Although the story explicitly reads as $\node_{\tt conflict}, \node_{\tt backstory}, \node_{\tt resolution}$, implicitly, one must learn to decide on  $\node_{\tt backstory}$ before all else.
We however conjecture that the teacher-forced model would suffer the Clever Hans cheat wherein it would first learn to fit $\node_{\tt resolution}$
using simple deductive skills. With this crucial part of the story lost as supervision, the 
model can no longer decipher how the remaining pieces relate i.e., how $\node_{\tt backstory}$ must be planned in advance of $\node_{\tt conflict}$. We conjecture that the resulting model would learn to generate uninteresting stories, interjecting arbitrary conflicts and backstories on a whim, subsequently forcing contrived resolutions upon them. %
While this hypothesis is not straightforward to empirically test for, we provide a more detailed conceptual illustration in \S\ref{sec:story}.

\section{Experimental Verification}
\label{sec:verify}
In this section, we demonstrate our hypothesized failure modes on the graph path-finding task.   We show this in both Transformers and Mamba to demonstrate that these failures are general to teacher-forced models. 
First, we establish that our teacher-forced models fit the training data but fail in-distribution. Next, we design metrics to quantify the extent to which the two hypothesized mechanisms (Failures~\ref{fail:clever},~\ref{fail:diff-token}) occur. Finally, we design alternative  objectives to intervene and remove each of the two failure modes, to test whether the performance improves. We report additional experiments in \S\ref{sec:arithmetic} for an arithmetic task, and in \S\ref{sec:snowball-expt} quantifying the Snowballing Failure~\ref{fail:snowball}.  We describe our experimental setting more precisely below.

\paragraph{Dataset.}%
We denote by ${G}_{d, l}(N)$ for $d, l, N \in \mathbb{N}$, a path-star graph consisting of a center node $\start$ with degree $d \in \mathbb{N}$, meaning there are $d$ different paths emerging from the center node, each consisting of $l-1$ nodes (excluding the start node). Node values are uniformly sampled from $(\{0, \dots, N-1\})$ where $N$ can be larger than the actual number of nodes in the path-star graph. %
In every graph, we use the center node as the starting node $\start$ and then pick as $\goal$, the last node of one of the paths chosen uniformly at random. %
The order of the edges in the adjacency list is randomized. We describe the tokenization in \S\ref{sec:tokenizer}.

For each experiment, we generate the training and test graphs from the same distribution $\distr$, all with the \textit{same} topology of ${G}_{d, l}(N)$ with fixed $d, l$ and $N$. Thus, any failure we demonstrate is an \textit{in-distribution} failure, and does not arise from the inability to generalize to different problem lengths \citep{anil22length}. While the graphs have identical topology, this is not a trivial memorization problem for the model, since the graphs are labeled differently, and the adjacency list randomized --- the model \textit{has} to learn a general algorithm. Throughout the experiments, we fix the number of samples to $200k$ and fix the number of node values to $N=100$ across topologies to enable diverse instantiations of the topology for training and testing.

\paragraph{Models.} We evaluate models from two architectural families to highlight that the failures are not tied to a particular architecture but stem from the next-token prediction objective. For Transformers, we use  from-scratch GPT-Mini, and pretrained  GPT-2 large \citep{Radford2019LanguageMA}. For recurrent models, we use from-scratch Mamba \citep{gu2023mamba}. We optimize using \textit{AdamW} \citep{loshchilov2018decoupled} until perfect training accuracy. To rule out grokking behaviour \citep{power2022grokking}, we trained the cheaper models for as long as $500$ epochs. More details are in \S\ref{sec:models}.

\subsection{Observations.}

\begin{figure*}
  \includegraphics[width=\textwidth,trim={0 0 0 10em},clip]{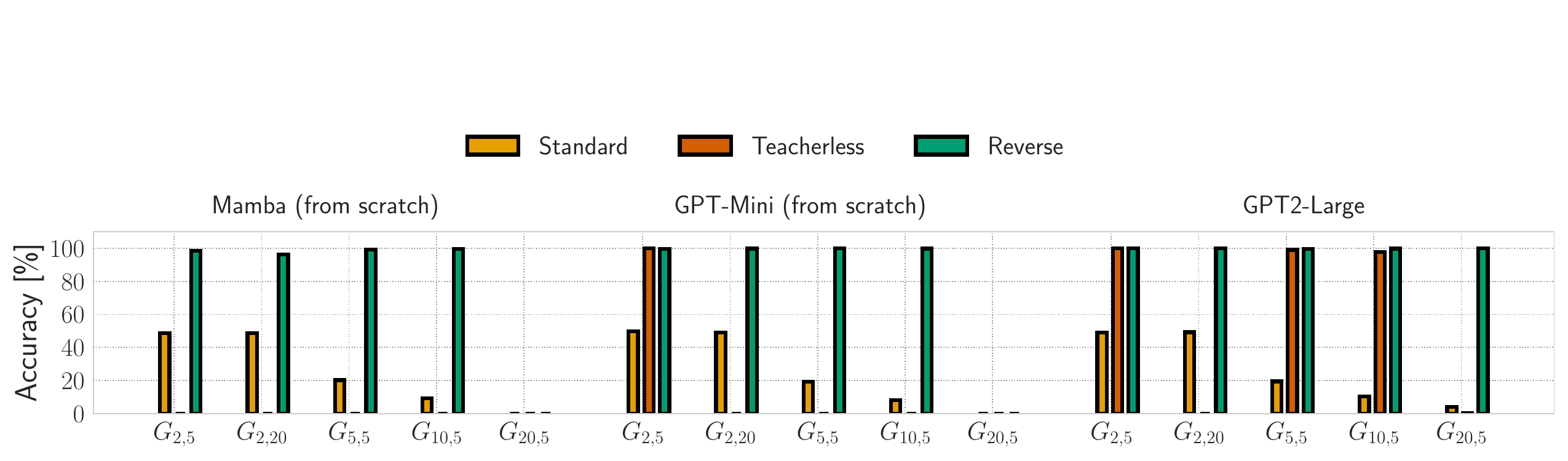}
  \caption{For different architectures, we report the accuracy of the standard teacher-forced model ($\agacc$, Eq~\ref{eq:gen-acc}), teacherless-trained model's accuracy ($\dollaracc$, Eq~\ref{eq:foresight-acc}) and accuracy of the model trained with reversed targets ($\reverseacc$, Eq~\ref{eq:rev-acc}) evaluated on path-finding a range of graphs (with degree in the first subscript, and path length in the second).}
  \label{fig:all-accs}
\end{figure*}

\begin{figure}
    \centering
\includegraphics[width=0.25\textwidth]{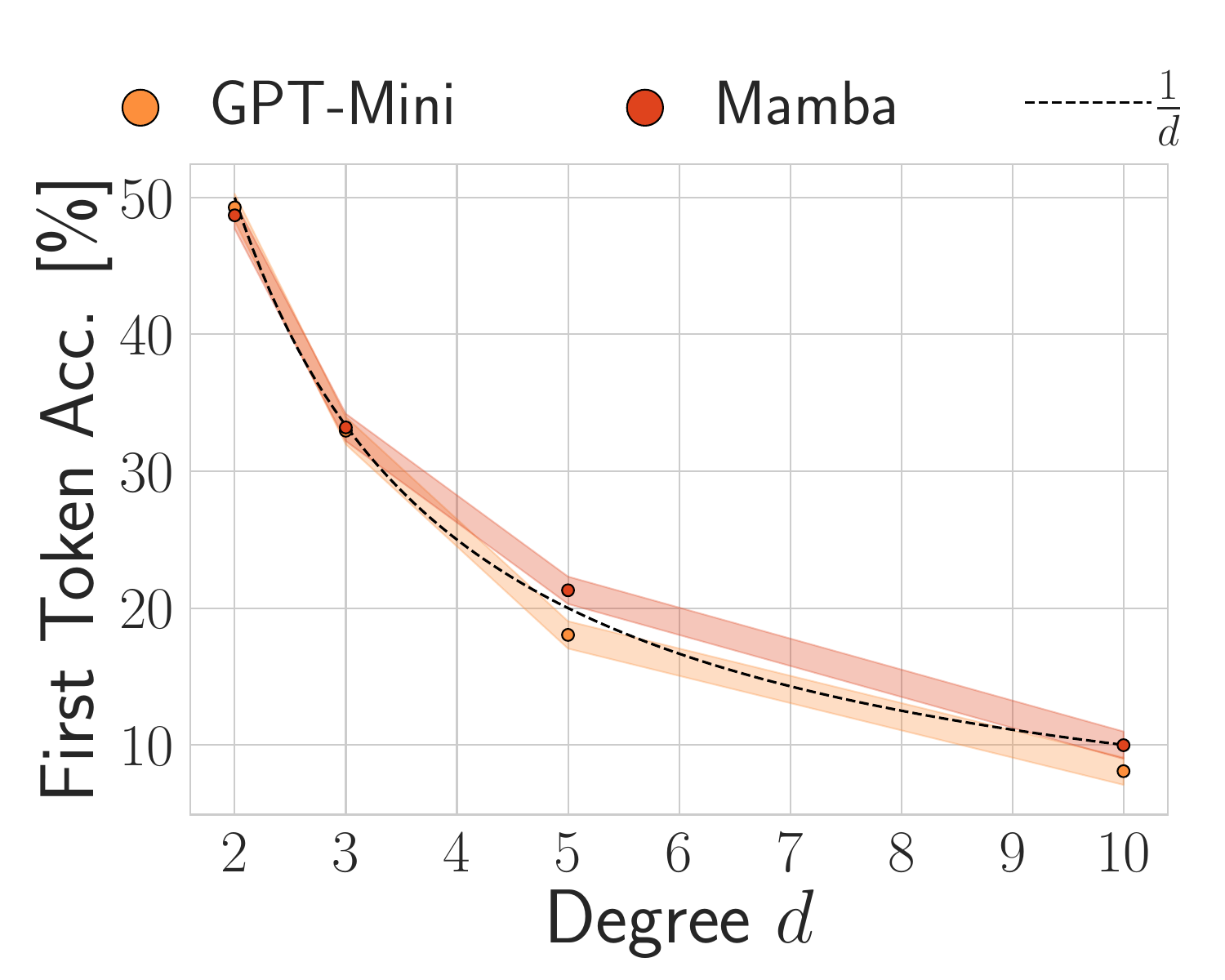}%
\includegraphics[width=0.25\textwidth]{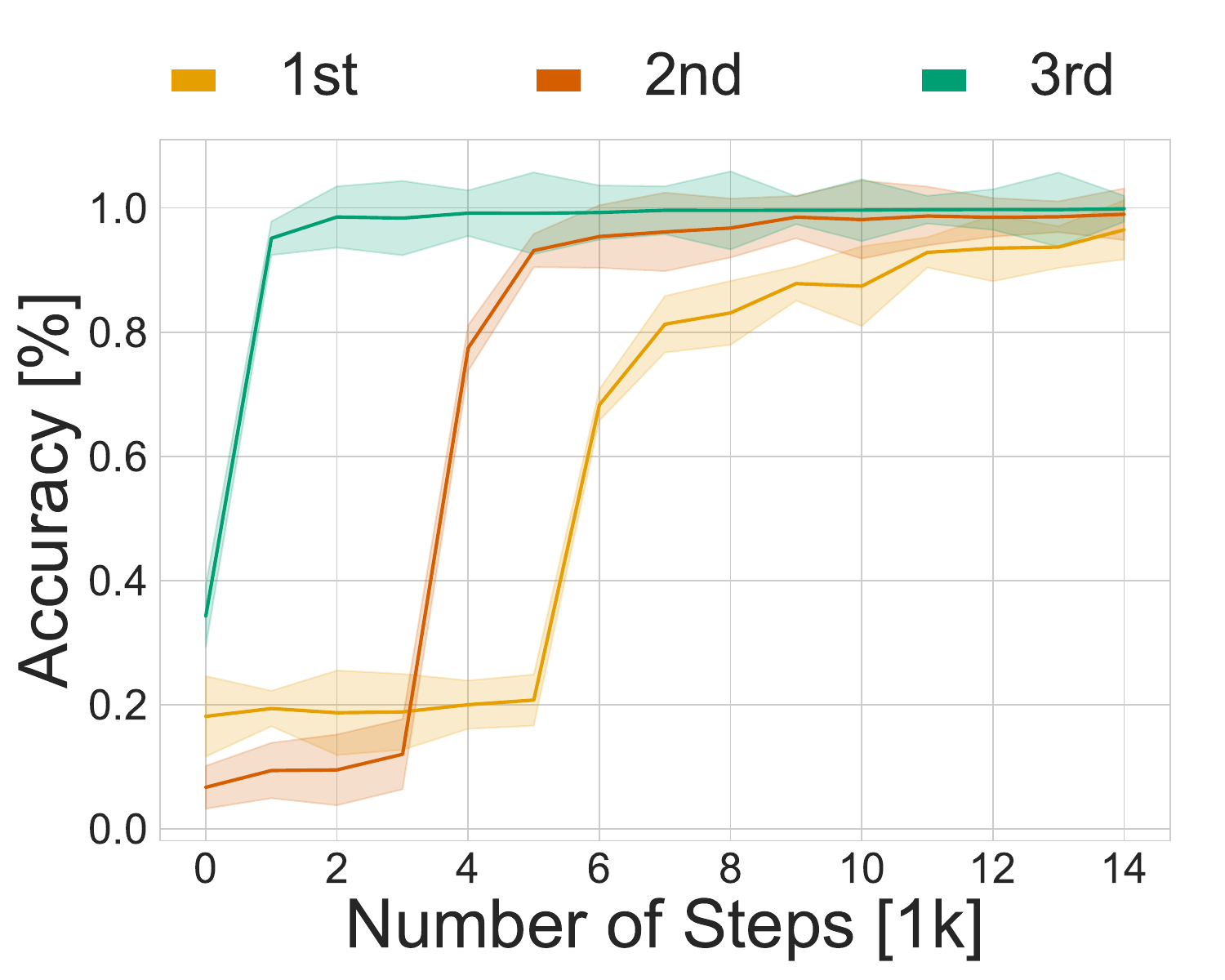}
    \caption{$\firstacc(\llm)$ (in percent $\%$, Eq~\ref{eq:first-acc}) for path-star graphs of various degrees $d \in \{2, 3, 5, 10\}$ for fixed path length $l=5$ (left). Individual token accuracies (for $\node_1, \node_2, \node_3$) for the graph $G_{5, 5}$ under teacherless training (Eq~\ref{eq:foresight-obj}) with GPT2-large  (right). }
    \label{fig:diff-traj}
\end{figure}

\paragraph{Verifying in-distribution failure.} 
For a given distribution, we evaluate all our teacher-forced models by autoregressively generating solutions, and comparing that solution with the true one for an exact-match. We denote this accuracy as $\agacc(\llm)$ (see Eq~\ref{eq:gen-acc}) and report it 
for path-star graphs of varying topologies in Fig.~\ref{fig:all-accs} and Table~\ref{table:gen-acc}. As observed, all models (even when pre-trained) struggle to learn the task accurately. The accuracies are precisely limited to the value when uniformly guessing a path from $\start$  i.e., $\approx \frac{1}{d}$, thus establishing complete in-distribution failure. This is so even when trained to fit sample sizes up to $200k$ to $100\%$ accuracy, and despite the fact that the training and test graphs have identical topology. Next, we quantitatively demonstrate how this stark failure arises from our two hypothesized mechanisms (Failure~\ref{fail:clever},~\ref{fail:diff-token}).

\paragraph{Verifying Failure \ref{fail:clever} (The Clever Hans cheat)} We had hypothesized that the teacher-forced model would cheat to fit the training tokens (the ones that follow $\responsetoken_1$ in each instance). Specifically, to predict node $\node_{i}$ in the true path, the model can exploit the ground truth node $\node_{i-1}$ that is revealed as input. Rather than learning to plan, the model would simply predict the node that is outwardly  adjacent to $\node_{i-1}$. To quantify whether this behavior emerges, we ``teacher-force'' the model with a uniform random neigbhor $\node_1'$ of $\start$. We then test whether the model indiscriminately applies the learned Clever Hans cheat here: does the model religiously follow the path that emanates from the neigbhor $\node_1'$, not necessarily ending in $\goal$? We measure the exact match of this path on a held-out set as $\cheatacc(\llm)$ ( Eq~\ref{eq:cheatacc}).

Empirically, in \S\ref{sec:more-results}, Table \ref{table:cheat-accs}, we find $\cheatacc(\llm) \approx 100\%$ almost across the board (except for high-degree graphs where training is challenging). This establishes that to fit the training data, the teacher-forced model has exploited the Clever Hans cheat.

\paragraph{Verifying Failure \ref{fail:diff-token} (The Indecipherable Token)} Recall that the Clever Hans cheat only applies to all but the first node $\node_1$ after $\start$ lying on the path.
After the Clever Hans cheat fits the rest of the path during training, we hypothesized that node $\node_1$ may become impossible to learn since the model is deprived of all information about the subsequent targets. To quantify this behavior, we evaluate how well the model is able to predict the difficult first node, $\node_1$. We measure this on the held-out set and denote this as $\firstacc(\llm)$ (see Eq~\ref{eq:first-acc}). 
As shown in Fig.~\ref{fig:diff-traj} the model achieves a low $\firstacc(\llm)$, approximately $1/d$. Thus, the model indeed fails to identify that $\node_1$ is the one on the path to $\goal$ and instead randomly emitting one of the $d$ neighbors of $\start$. %

\paragraph{Removing the Clever Hans cheat via teacherless training \citep{tschannen23parallel,monea23pass}} We now consider a training setup where we prevent Clever Hans cheating (Failure~\ref{fail:clever}) and examine how learning differs. 
Concretely, consider modifying teacher-forcing by replacing the \textit{input} $\response$ (which reveals the ground truth) with an uninformative input  $\response^{\$}$, consisting of the same special (``lookahead'') token $\$$ repeated $l$ times. %
For supervision in the loss, we still use the original target $\response$. Thus, the model cannot fit the targets by looking at the prefixes $\response_{<i}$ and by predicting the next token $\node_i$ via cheating. Instead, the model only has access to the graph description in $\prefix$ to lookahead and fit all the targets $\node_i$ for $i=1, \hdots, l$.  Formally, we maximize:
\begin{align*}
    \foresightobj(\theta) 
    &= \mathbb{E}_{\distr}\Big[ \sum_{i=1}^{\responselen} \log \llmprob \Big( \hat{\responsetoken}_i = {\responsetoken}_i ; \prefix, \response^{\$}_{<i} \Big) \Big]. \numberthis \label{eq:foresight-obj}
\end{align*}

We denote a model trained this way by $\llm^{\$}$ and perform inference simply by conditioning on 
$\$$ tokens i.e., to extract $\hat{\responsetoken}_i$,
we feed the uninformative prefix $\response^{\$}_{<i}$ as input, rather than autoregressively feeding the output $\hat{\response}_{<i}$ as input. We denote the resulting accuracy by  $\dollaracc(\llm^{\$})$ (see Eq~\ref{eq:foresight-acc}). 
Our goal however is to evaluate whether forcing the model to lookahead can dodge the Clever Hans cheat, thereby allowing the correct mechanism to be learned.

We report the accuracy of these teacherless models in Fig.~\ref{fig:all-accs} and Table~\ref{table:dollar-accs}. Unfortunately, in most cases, the teacherless objective is too hard for the models to even fit the training data, likely because there is no simple cheat to employ here. However, surprisingly,
on some of the easier graphs, the models not only fit the training data, but generalize well to test data.  This positive result (even if in limited settings) verifies two hypotheses. First,
the Clever Hans cheat is indeed caused failure in the original teacher-forced model. Secondly, and remarkably, with the cheat gone, these models are able to fit the first node which had once been indecipherable under teacher-forcing. This  verifies our hypothesis that the Clever Hans cheat absorbs away supervision that is critical to learn the first token. At the end of this section, we provide more intuition for how the absence of Clever Hans cheat, allows the teacherless models to solve this task.

\paragraph{Removing the Indecipherable Token failure via path reversal.} Back in the teacher-forcing setup, we make a slight change: we train the model to predict the reversal of the true path $\response$. Indeed, prior works  \citep{lee23teaching,shen23positional} have proposed reversal in the context of addition tasks as a way of explicitly guiding the next-token predictor to learn a simpler algorithm. Likewise, in our ``reversed'' path-finding task, the model now needs to predict $\goal$ first and make its way to $\start$; the hope is that since there is only one unique path emanating from $\goal$, there is no planning required.  Thus we should never run into an Indecipherable Token. Every next node can be learned as the node that is inwardly adjacent to the previous node. %

We display the results in Fig.~\ref{fig:all-accs} and Table~\ref{table:reverse-gen}. As expected, we observe that reversing  significantly boosts learning, allowing even models trained from scratch to solve the task. This verifies that for the standard model, indecipherability of the first token was indeed a roadblock to successful learning. 

\subsection{Why the Failure of Teacher-Forcing is Remarkable.} 

The success of the reversed training (and  of teacherless training) make the in-distribution failure of teacher-forcing particularly surprising. When viewed left-to-right, our problem requires complex planning --- evaluating multiple paths and selecting the right one --- but when viewed right-to-left, the problem is straightforward; the experiments on the reversed formulation confirm that the right-to-left solution {is} not only expressible by our architectures, but also learnable via gradient descent. Evidently, the left-to-right teacher-forced model is unable to view the problem any differently and falls into the traps outlined in \S\ref{sec:cleverhans}. %

\textbf{Intuition for teacherless training.} We hypothesize that even teacherless training allows the model to implicitly learn the right-to-left view.
Concretely, the teacherless model cannot use the trivial Clever Hans cheat to fit the data, since the ground truth prefixes are not available during training. Nor is it explicitly prescribed to fit the target right-to-left. Instead the model is tasked with using only the graph description in $\prefix$ to fit all the target nodes $\response$ (implicitly requiring a lookahead beyond just the next token). %
Our key intuition is that, in this paradigm, the model would first fit the target token that is simplest to deduce using only information available in the prefix $\prefix$: this is the penultimate vertex $\responsetoken_{l-1}$ which is the unique token that precedes the goal (and can be discovered using a simple scan of the prefix). Once the model figures this out, the model can similarly work backwards to fit each node $\responsetoken_{i-1}$ using the previously-fit $\responsetoken_{i}$.  (Also see Remark~\ref{rem:teacherles}) %

Our hypothesis is borne out in Fig.~\ref{fig:diff-traj} where we see that the later tokens achieve higher accuracy earlier,  implying that the teacherless model voluntarily learns right-to-left. Thus, the teacherless objective provides an alternative training paradigm that forces models to look ahead, without falling into the various short-sighted pitfalls of next-token prediction, discussed in \S\ref{sec:cleverhans}.

\section{Related Work}
\label{sec:related}

We consolidate the arguments surrounding next-token prediction that has been fragmented over various lines of works. 
Part of our elaborate survey is deferred to \S\ref{sec:more-related-work}. 

\textbf{Arguments in support of next-token prediction.} 
\citet{shannon48ntp,shannon51english,alabdulmohsin2024fractal} demonstrate that language has enough redundancy to be conducive for next-token prediction.
Empirically, \citet{shlegeris22language} find that modern language models are surprisingly better than humans at next-token prediction on the text dataset, OpenWebText \citep{Gokaslan2019OpenWeb}. %
But this does not preclude the possibility that next-token predictors may still be poor at planning. %
Furthermore, the above result may be confounded by the ability of language models to store more general knowledge than humans. 

On the theoretical side, \citet{merrill23expressive,feng23towards} show that autoregressive Transformers that generate chains of thought have a larger \textit{expressive} power. %
Most relevant to us is the positive \textit{learnability} results of \citet{malach23auto,weis23subtask} which argue that complex multi-hop tasks that are otherwise unlearnable, become learnable via next-token prediction
when there is a preceding chain-of-thought supervision for each hop. %
Our negative result does not contradict this.
In our problem, learning the first token requires an implicit chain of thought (the reversed path) that we do not provide \textit{before} the first token.

\textbf{Arguments against next-token prediction.} The most well-formulated criticism is the snowballing failure mode, which appears scattered in various forms in literature \citet{dziri23faith,lecun24ar,kaariainen2006lower,ross10efficient}.
As explained earlier, this is orthogonal to our failure;
indeed, in our setting the error happens ``instantaneously'' at the beginning, rather than snowball over time.

Our main counterexample can be seen as formalizing an emerging, informal intuition, often worded as ``autoregressive next-token predictors are ill-suited for planning tasks''.  Indeed, \citet{momennejad23eval,valmeekam23critique,valmeekam2023planbench,planning23valmeekam} report failures on several  planning tasks framed as word problems (including path-finding in \citet{momennejad23eval}) and \citet{bubeck23sparks} on various arithmetic, summarization and poem/story generation tasks.  %
\citet{mccoy23embers} argue that, for such tasks, the performance of the model must greatly depend on its frequency during pretraining. 
However, we show that even when trained on many samples from a distribution, the next-token predictor can fail on the very distribution.

Our work extends and clarifies this discourse by introducing the Clever Hans cheat and the Indecipherable Token failure. Next, we empirically report our failure modes in both the Transformer \citep{vaswani17attention} and the Mamba structured state space model \citep{gu2023mamba}. Thus, what we witness is indeed a failure of next-token prediction (and not of the Transformer architecture as some existing criticisms are framed). Importantly, existing literature pins these failures broadly on the next-token prediction paradigm and interchangeably, on the inability of the autoregressive architecture to backtrack.
We emphasize the need to differentiate between the two types of next-token prediction (teacher-forcing and autoregressive inference) as they lead to distinct planning-related failures and require distinct solutions. %

\textbf{Going beyond next-token prediction.} 
Various works have explored architectures and objectives that train the backbone to go beyond next-token prediction. This includes non-autoregressive models \citep{nag18gu}, energy-based models \cite{dawid23latent}, diffusion models \citep{gong23diffuseq}, and variants of Transformers learning to either predict future tokens in a different ordering \citep{li21discovering} or all at one go \citep{qi20prophet,monea23pass}, or injecting ``lookahead'' data \citep{du23lookahead}.

Teacherless training was proposed for image-to-text captioning models in \citet{tschannen23parallel} under the more generic name of parallel prediction with the  similar motivation that the tokens in a caption must purely rely on the image rather than parts of the caption itself. \citet{monea23pass} proposed the same idea as ``parallel speculative sampling'' for the orthogonal goal improving the inference-time compute. 
On that note, we clarify that research in parallel decoding too is concerned with predicting multiple future tokens \citep{stern18blockwise}, the goal is purely inference-time efficiency. Finally, it is worth noting that action chunking \citep{zhao23act} in imitation learning is a close counterpart of teacherless training although motivated through different arguments.

 \section{Conclusion}

Next-token prediction lies at the heart of modern language models which  have empirically demonstrated tremendous success in wide-ranging tasks. Theoretically too, we know by the chain rule of probability that, 
next-token predictors can express any  distribution over tokens. It is tempting then to view next-token prediction as a formidable approach to modeling intelligence.  Our work crystallizes the core arguments around why this optimism may be misplaced.

We emphasize not to conflate the two modes of next-token prediction: autoregressive inference and teacher-forced training. While existing criticisms primarily challenge autoregressive inference, they assume that teacher-forcing learns a good next-token predictor. We challenge this very assumption, finding that even in a straightforward task, there is failure due to teacher-forcing --- not due to autoregressive inference or the architecture. This casts a shadow over more complex tasks. For instance, as we speculate in \S\ref{sec:story}, can a model trained to predict the next token of thousands of fiction novels, learn to generate plot twists?

An immediate way to circumvent this, as our reversal experiments suggest, is to train with chain-of-thought supervision, echoing \citet{malach23auto,weis23subtask}. However, it is unclear how that is possible in more unstructured tasks like story-writing.
To that end, our minimal counterexample and the idea of {teacherless} training \citep{monea23pass} may inspire alternative paradigms to next-token prediction in practice. Overall, we hope our analyses provide a solid ground to pursue future debates on next-token prediction.

We point the reader to \S\ref{sec:limitations} for a discussion of the limitations of our study.

\section*{Acknowledgments} 

We would like to thank Colin Raffel, Tiago Pimentel, and Surbhi Goel for their extensive feedback on a draft of this preprint, especially for pointers to some key references.
We thank Garret Tanzer for pointing us to \citet{tschannen23parallel} for a prior instance of the idea of teacherless training.

\section*{Impact Statement}

Our results outline the limits of a foundational technique that lies at the heart of modern AI systems. Naturally, there are many potential downstream societal consequences that would apply at large to such foundational work, none we feel must be specifically highlighted here.

\bibliography{references}

\begin{thebibliography}{106}
\providecommand{\natexlab}[1]{#1}
\providecommand{\url}[1]{\texttt{#1}}
\expandafter\ifx\csname urlstyle\endcsname\relax
  \providecommand{\doi}[1]{doi: #1}\else
  \providecommand{\doi}{doi: \begingroup \urlstyle{rm}\Url}\fi

\bibitem[Alabdulmohsin et~al.(2024)Alabdulmohsin, Tran, and Dehghani]{alabdulmohsin2024fractal}
Alabdulmohsin, I., Tran, V.~Q., and Dehghani, M.
\newblock Fractal patterns may unravel the intelligence in next-token prediction, 2024.

\bibitem[Allen-Zhu \& Li(2023)Allen-Zhu and Li]{zhu23physics}
Allen-Zhu, Z. and Li, Y.
\newblock Physics of language models: Part 3.2, knowledge manipulation.
\newblock \emph{arXiv preprint arXiv:2309.14402}, 2023.

\bibitem[Anil et~al.(2022)Anil, Wu, Andreassen, Lewkowycz, Misra, Ramasesh, Slone, Gur{-}Ari, Dyer, and Neyshabur]{anil22length}
Anil, C., Wu, Y., Andreassen, A., Lewkowycz, A., Misra, V., Ramasesh, V.~V., Slone, A., Gur{-}Ari, G., Dyer, E., and Neyshabur, B.
\newblock Exploring length generalization in large language models.
\newblock In \emph{Advances in Neural Information Processing Systems 35: Annual Conference on Neural Information Processing Systems 2022, NeurIPS 2022}, 2022.

\bibitem[Arkoudas(2023)]{arkoudas2023chatgpt}
Arkoudas, K.
\newblock Chatgpt is no stochastic parrot. but it also claims that 1 is greater than 1.
\newblock \emph{Philosophy \& Technology}, 36\penalty0 (3):\penalty0 54, 2023.

\bibitem[Artetxe et~al.(2022)Artetxe, Du, Goyal, Zettlemoyer, and Stoyanov]{artetxe22role}
Artetxe, M., Du, J., Goyal, N., Zettlemoyer, L., and Stoyanov, V.
\newblock On the role of bidirectionality in language model pre-training.
\newblock In \emph{Findings of the Association for Computational Linguistics: {EMNLP} 2022, Abu Dhabi, United Arab Emirates, December 7-11, 2022}, pp.\  3973--3985. Association for Computational Linguistics, 2022.

\bibitem[Bahdanau et~al.(2017)Bahdanau, Brakel, Xu, Goyal, Lowe, Pineau, Courville, and Bengio]{bahdanau17actor}
Bahdanau, D., Brakel, P., Xu, K., Goyal, A., Lowe, R., Pineau, J., Courville, A.~C., and Bengio, Y.
\newblock An actor-critic algorithm for sequence prediction.
\newblock In \emph{5th International Conference on Learning Representations, {ICLR} 2017, Conference Track Proceedings}, 2017.

\bibitem[Bender et~al.(2021)Bender, Gebru, McMillan{-}Major, and Shmitchell]{bender21stochastic}
Bender, E.~M., Gebru, T., McMillan{-}Major, A., and Shmitchell, S.
\newblock On the dangers of stochastic parrots: Can language models be too big?
\newblock In \emph{FAccT '21: 2021 {ACM} Conference on Fairness, Accountability, and Transparency, Virtual Event / Toronto, Canada, March 3-10, 2021}, pp.\  610--623. {ACM}, 2021.

\bibitem[Bengio et~al.(2015)Bengio, Vinyals, Jaitly, and Shazeer]{bengio15scheduled}
Bengio, S., Vinyals, O., Jaitly, N., and Shazeer, N.
\newblock Scheduled sampling for sequence prediction with recurrent neural networks.
\newblock In \emph{Advances in Neural Information Processing Systems 28: Annual Conference on Neural Information Processing Systems 2015}, pp.\  1171--1179, 2015.

\bibitem[Besta et~al.(2024)Besta, Blach, Kubicek, Gerstenberger, Podstawski, Gianinazzi, Gajda, Lehmann, Niewiadomski, Nyczyk, et~al.]{besta23graph}
Besta, M., Blach, N., Kubicek, A., Gerstenberger, R., Podstawski, M., Gianinazzi, L., Gajda, J., Lehmann, T., Niewiadomski, H., Nyczyk, P., et~al.
\newblock Graph of thoughts: Solving elaborate problems with large language models.
\newblock In \emph{Proceedings of the AAAI Conference on Artificial Intelligence}, volume~38, 2024.

\bibitem[Bubeck et~al.(2023)Bubeck, Chandrasekaran, Eldan, Gehrke, Horvitz, Kamar, Lee, Lee, Li, Lundberg, et~al.]{bubeck23sparks}
Bubeck, S., Chandrasekaran, V., Eldan, R., Gehrke, J., Horvitz, E., Kamar, E., Lee, P., Lee, Y.~T., Li, Y., Lundberg, S., et~al.
\newblock Sparks of artificial general intelligence: Early experiments with gpt-4.
\newblock \emph{arXiv preprint arXiv:2303.12712}, 2023.

\bibitem[Burtsev et~al.(2020)Burtsev, Kuratov, Peganov, and Sapunov]{burtsev2020memory}
Burtsev, M.~S., Kuratov, Y., Peganov, A., and Sapunov, G.~V.
\newblock Memory transformer.
\newblock \emph{arXiv preprint arXiv:2006.11527}, 2020.

\bibitem[Chang et~al.(2015)Chang, Krishnamurthy, Agarwal, III, and Langford]{chang15lols}
Chang, K., Krishnamurthy, A., Agarwal, A., III, H.~D., and Langford, J.
\newblock Learning to search better than your teacher.
\newblock In \emph{Proceedings of the 32nd International Conference on Machine Learning, {ICML} 2015}, volume~37 of \emph{{JMLR} Workshop and Conference Proceedings}, 2015.

\bibitem[Cobbe et~al.(2021)Cobbe, Kosaraju, Bavarian, Chen, Jun, Kaiser, Plappert, Tworek, Hilton, Nakano, Hesse, and Schulman]{cobbe21training}
Cobbe, K., Kosaraju, V., Bavarian, M., Chen, M., Jun, H., Kaiser, L., Plappert, M., Tworek, J., Hilton, J., Nakano, R., Hesse, C., and Schulman, J.
\newblock Training verifiers to solve math word problems.
\newblock \emph{arXiv preprint arXiv:2110.14168}, 2021.

\bibitem[{Daum{\'{e}} III} et~al.(2009){Daum{\'{e}} III}, Langford, and Marcu]{daume09structured}
{Daum{\'{e}} III}, H., Langford, J., and Marcu, D.
\newblock Search-based structured prediction.
\newblock \emph{Mach. Learn.}, 75\penalty0 (3):\penalty0 297--325, 2009.

\bibitem[Dawid \& LeCun(2023)Dawid and LeCun]{dawid23latent}
Dawid, A. and LeCun, Y.
\newblock Introduction to latent variable energy-based models: A path towards autonomous machine intelligence.
\newblock \emph{arXiv preprint arXiv:2306.02572}, 2023.

\bibitem[Du et~al.(2023{\natexlab{a}})Du, Mei, and Eisner]{du23lookahead}
Du, L., Mei, H., and Eisner, J.
\newblock Autoregressive modeling with lookahead attention.
\newblock \emph{arXiv preprint arXiv:2305.12272}, 2023{\natexlab{a}}.

\bibitem[Du et~al.(2023{\natexlab{b}})Du, Torroba~Hennigen, Pimentel, Meister, Eisner, and Cotterell]{du2023measure}
Du, L., Torroba~Hennigen, L., Pimentel, T., Meister, C., Eisner, J., and Cotterell, R.
\newblock A measure-theoretic characterization of tight language models.
\newblock In \emph{Proceedings of the 61st Annual Meeting of the Association for Computational Linguistics (Volume 1: Long Papers)}, Toronto, Canada, 2023{\natexlab{b}}. Association for Computational Linguistics.

\bibitem[Dziri et~al.(2024)Dziri, Lu, Sclar, Li, Jiang, Lin, Welleck, West, Bhagavatula, Le~Bras, et~al.]{dziri23faith}
Dziri, N., Lu, X., Sclar, M., Li, X.~L., Jiang, L., Lin, B.~Y., Welleck, S., West, P., Bhagavatula, C., Le~Bras, R., et~al.
\newblock Faith and fate: Limits of transformers on compositionality.
\newblock \emph{Advances in Neural Information Processing Systems}, 36, 2024.

\bibitem[Feng et~al.(2023)Feng, Zhang, Gu, Ye, He, and Wang]{feng23towards}
Feng, G., Zhang, B., Gu, Y., Ye, H., He, D., and Wang, L.
\newblock Towards revealing the mystery behind chain of thought: a theoretical perspective.
\newblock \emph{Advances in Neural Information Processing Systems}, 36, 2023.

\bibitem[Glasmachers(2017)]{glasmachers17endtoend}
Glasmachers, T.
\newblock Limits of end-to-end learning.
\newblock In Zhang, M. and Noh, Y. (eds.), \emph{Proceedings of The 9th Asian Conference on Machine Learning, {ACML} 2017}, volume~77 of \emph{Proceedings of Machine Learning Research}, pp.\  17--32. {PMLR}, 2017.

\bibitem[Gokaslan \& Cohen(2019)Gokaslan and Cohen]{Gokaslan2019OpenWeb}
Gokaslan, A. and Cohen, V.
\newblock Openwebtext corpus.
\newblock \url{http://Skylion007.github.io/OpenWebTextCorpus}, 2019.

\bibitem[Gong et~al.(2023)Gong, Li, Feng, Wu, and Kong]{gong23diffuseq}
Gong, S., Li, M., Feng, J., Wu, Z., and Kong, L.
\newblock Diffuseq: Sequence to sequence text generation with diffusion models.
\newblock In \emph{The Eleventh International Conference on Learning Representations, {ICLR} 2023, Kigali, Rwanda, May 1-5, 2023}. OpenReview.net, 2023.

\bibitem[Goyal et~al.(2016)Goyal, Lamb, Zhang, Zhang, Courville, and Bengio]{goyal16professor}
Goyal, A., Lamb, A., Zhang, Y., Zhang, S., Courville, A.~C., and Bengio, Y.
\newblock Professor forcing: {A} new algorithm for training recurrent networks.
\newblock In \emph{Advances in Neural Information Processing Systems 29: Annual Conference on Neural Information Processing Systems 2016}, pp.\  4601--4609, 2016.

\bibitem[Goyal et~al.(20234)Goyal, Ji, Rawat, Menon, Kumar, and Nagarajan]{goyal23think}
Goyal, S., Ji, Z., Rawat, A.~S., Menon, A.~K., Kumar, S., and Nagarajan, V.
\newblock Think before you speak: Training language models with pause tokens.
\newblock \emph{The Twelfth International Conference on Learning Representations, {ICLR} 2024}, 20234.

\bibitem[Gu \& Dao(2023)Gu and Dao]{gu2023mamba}
Gu, A. and Dao, T.
\newblock Mamba: Linear-time sequence modeling with selective state spaces, 2023.

\bibitem[Gu et~al.(2018)Gu, Bradbury, Xiong, Li, and Socher]{nag18gu}
Gu, J., Bradbury, J., Xiong, C., Li, V. O.~K., and Socher, R.
\newblock Non-autoregressive neural machine translation.
\newblock In \emph{6th International Conference on Learning Representations, {ICLR} 2018, Conference Track Proceedings}. OpenReview.net, 2018.

\bibitem[G{\"{u}}l{\c{c}}ehre \& Bengio(2016)G{\"{u}}l{\c{c}}ehre and Bengio]{gulcehre16knowledge}
G{\"{u}}l{\c{c}}ehre, {\c{C}}. and Bengio, Y.
\newblock Knowledge matters: Importance of prior information for optimization.
\newblock \emph{J. Mach. Learn. Res.}, 17:\penalty0 8:1--8:32, 2016.

\bibitem[Gurnee et~al.(2023)Gurnee, Nanda, Pauly, Harvey, Troitskii, and Bertsimas]{gurnee10finding}
Gurnee, W., Nanda, N., Pauly, M., Harvey, K., Troitskii, D., and Bertsimas, D.
\newblock Finding neurons in a haystack: Case studies with sparse probing.
\newblock \emph{arXiv preprint arXiv:2305.01610}, 2023.

\bibitem[Havrilla et~al.(2024)Havrilla, Du, Raparthy, Nalmpantis, Dwivedi-Yu, Zhuravinskyi, Hambro, Sukhbaatar, and Raileanu]{havrilla2024teaching}
Havrilla, A., Du, Y., Raparthy, S.~C., Nalmpantis, C., Dwivedi-Yu, J., Zhuravinskyi, M., Hambro, E., Sukhbaatar, S., and Raileanu, R.
\newblock Teaching large language models to reason with reinforcement learning, 2024.

\bibitem[Hsieh et~al.(2023)Hsieh, Li, Yeh, Nakhost, Fujii, Ratner, Krishna, Lee, and Pfister]{hsieh2023distilling}
Hsieh, C.-Y., Li, C.-L., Yeh, C.-K., Nakhost, H., Fujii, Y., Ratner, A., Krishna, R., Lee, C.-Y., and Pfister, T.
\newblock Distilling step-by-step! outperforming larger language models with less training data and smaller model sizes.
\newblock \emph{arXiv preprint arXiv:2305.02301}, 2023.

\bibitem[Huang et~al.(2022)Huang, Xia, Xiao, Chan, Liang, Florence, Zeng, Tompson, Mordatch, Chebotar, Sermanet, Jackson, Brown, Luu, Levine, Hausman, and Ichter]{huang22internal}
Huang, W., Xia, F., Xiao, T., Chan, H., Liang, J., Florence, P., Zeng, A., Tompson, J., Mordatch, I., Chebotar, Y., Sermanet, P., Jackson, T., Brown, N., Luu, L., Levine, S., Hausman, K., and Ichter, B.
\newblock Inner monologue: Embodied reasoning through planning with language models.
\newblock In \emph{Conference on Robot Learning, CoRL 2022, 14-18 December 2022, Auckland, New Zealand}, volume 205 of \emph{Proceedings of Machine Learning Research}, pp.\  1769--1782. {PMLR}, 2022.

\bibitem[Jiang et~al.(2023)Jiang, Sablayrolles, Mensch, Bamford, Chaplot, de~las Casas, Bressand, Lengyel, Lample, Saulnier, Lavaud, Lachaux, Stock, Scao, Lavril, Wang, Lacroix, and Sayed]{jiang2023mistral}
Jiang, A.~Q., Sablayrolles, A., Mensch, A., Bamford, C., Chaplot, D.~S., de~las Casas, D., Bressand, F., Lengyel, G., Lample, G., Saulnier, L., Lavaud, L.~R., Lachaux, M.-A., Stock, P., Scao, T.~L., Lavril, T., Wang, T., Lacroix, T., and Sayed, W.~E.
\newblock Mistral 7b, 2023.

\bibitem[K{\"a}{\"a}ri{\"a}inen(2006)]{kaariainen2006lower}
K{\"a}{\"a}ri{\"a}inen, M.
\newblock Lower bounds for reductions.
\newblock In \emph{Atomic Learning Workshop}, 2006.

\bibitem[Kahneman(2011)]{kahneman2011thinking}
Kahneman, D.
\newblock \emph{Thinking, fast and slow}.
\newblock Farrar, Straus and Giroux, 2011.

\bibitem[Kojima et~al.(2022)Kojima, Gu, Reid, Matsuo, and Iwasawa]{kojima2022large}
Kojima, T., Gu, S.~S., Reid, M., Matsuo, Y., and Iwasawa, Y.
\newblock Large language models are zero-shot reasoners.
\newblock \emph{Advances in neural information processing systems}, 35, 2022.

\bibitem[Lai et~al.(2021)Lai, Zhang, Feng, Huang, and Zhao]{mrc21lai}
Lai, Y., Zhang, C., Feng, Y., Huang, Q., and Zhao, D.
\newblock Why machine reading comprehension models learn shortcuts?
\newblock In \emph{Findings of the Association for Computational Linguistics: {ACL/IJCNLP} 2021, Online Event, August 1-6, 2021}, volume {ACL/IJCNLP} 2021 of \emph{Findings of {ACL}}, pp.\  989--1002. Association for Computational Linguistics, 2021.

\bibitem[LeCun(2024)]{lecun24ar}
LeCun, Y.
\newblock Do large language models need sensory grounding for meaning and understanding?
\newblock University Lecture, 2024.

\bibitem[Lee et~al.(2023)Lee, Sreenivasan, Lee, Lee, and Papailiopoulos]{lee23teaching}
Lee, N., Sreenivasan, K., Lee, J.~D., Lee, K., and Papailiopoulos, D.
\newblock Teaching arithmetic to small transformers.
\newblock \emph{arXiv preprint arXiv:2307.03381}, 2023.

\bibitem[Li et~al.(2021)Li, Trabucco, Park, Luo, Shen, Darrell, and Gao]{li21discovering}
Li, X., Trabucco, B., Park, D.~H., Luo, M., Shen, S., Darrell, T., and Gao, Y.
\newblock Discovering non-monotonic autoregressive orderings with variational inference.
\newblock In \emph{9th International Conference on Learning Representations, {ICLR} 2021, Virtual Event, Austria, May 3-7, 2021}. OpenReview.net, 2021.
\newblock URL \url{https://openreview.net/forum?id=jP1vTH3inC}.

\bibitem[Li et~al.(2024)Li, Huang, Ildiz, Rawat, and Oymak]{li24ntp}
Li, Y., Huang, Y., Ildiz, M.~E., Rawat, A.~S., and Oymak, S.
\newblock Mechanics of next token prediction with self-attention.
\newblock In \emph{27th International Conference on Artificial Intelligence and Statistics (AISTATS)}, 2024.

\bibitem[Lin et~al.(2021)Lin, Jaech, Li, Gormley, and Eisner]{lin21limitations}
Lin, C., Jaech, A., Li, X., Gormley, M.~R., and Eisner, J.
\newblock Limitations of autoregressive models and their alternatives.
\newblock In \emph{Proceedings of the 2021 Conference of the North American Chapter of the Association for Computational Linguistics: Human Language Technologies, {NAACL-HLT} 2021, Online, June 6-11, 2021}, pp.\  5147--5173. Association for Computational Linguistics, 2021.

\bibitem[Ling et~al.(2017)Ling, Yogatama, Dyer, and Blunsom]{ling17induction}
Ling, W., Yogatama, D., Dyer, C., and Blunsom, P.
\newblock Program induction by rationale generation: Learning to solve and explain algebraic word problems.
\newblock In \emph{Proceedings of the 55th Annual Meeting of the Association for Computational Linguistics, {ACL} 2017, Vancouver, Canada, July 30 - August 4, Volume 1: Long Papers}, pp.\  158--167. Association for Computational Linguistics, 2017.

\bibitem[Liu et~al.(2023)Liu, Ash, Goel, Krishnamurthy, and Zhang]{liu23automata}
Liu, B., Ash, J.~T., Goel, S., Krishnamurthy, A., and Zhang, C.
\newblock Transformers learn shortcuts to automata.
\newblock In \emph{The Eleventh International Conference on Learning Representations, {ICLR} 2023}, 2023.

\bibitem[Liu et~al.(2020)]{stiennon20rl}
Liu, F. et~al.
\newblock Learning to summarize from human feedback.
\newblock In \emph{Proceedings of the 58th Annual Meeting of the Association for Computational Linguistics}, 2020.

\bibitem[Loshchilov \& Hutter(2019)Loshchilov and Hutter]{loshchilov2018decoupled}
Loshchilov, I. and Hutter, F.
\newblock Decoupled weight decay regularization.
\newblock In \emph{7th International Conference on Learning Representations, {ICLR} 2019, New Orleans, LA, USA, May 6-9, 2019}. OpenReview.net, 2019.

\bibitem[Lv et~al.(2023)Lv, Zhang, Xie, Tu, Chen, Wen, and Yan]{lv23reversal}
Lv, A., Zhang, K., Xie, S., Tu, Q., Chen, Y., Wen, J.-R., and Yan, R.
\newblock Are we falling in a middle-intelligence trap? an analysis and mitigation of the reversal curse.
\newblock \emph{arXiv preprint arXiv:2311.07468}, 2023.

\bibitem[Madaan et~al.(2024)Madaan, Tandon, Gupta, Hallinan, Gao, Wiegreffe, Alon, Dziri, Prabhumoye, Yang, et~al.]{madan23selfrefine}
Madaan, A., Tandon, N., Gupta, P., Hallinan, S., Gao, L., Wiegreffe, S., Alon, U., Dziri, N., Prabhumoye, S., Yang, Y., et~al.
\newblock Self-refine: Iterative refinement with self-feedback.
\newblock \emph{Advances in Neural Information Processing Systems}, 36, 2024.

\bibitem[Malach(2023)]{malach23auto}
Malach, E.
\newblock Auto-regressive next-token predictors are universal learners.
\newblock \emph{arXiv preprint arXiv:2309.06979}, 2023.

\bibitem[McCoy et~al.(2023)McCoy, Yao, Friedman, Hardy, and Griffiths]{mccoy23embers}
McCoy, R.~T., Yao, S., Friedman, D., Hardy, M., and Griffiths, T.~L.
\newblock Embers of autoregression: Understanding large language models through the problem they are trained to solve.
\newblock \emph{arXiv preprint arXiv:2309.13638}, 2023.

\bibitem[Meng et~al.(2022)Meng, Bau, Andonian, and Belinkov]{meng22locating}
Meng, K., Bau, D., Andonian, A., and Belinkov, Y.
\newblock Locating and editing factual associations in {GPT}.
\newblock In \emph{Advances in Neural Information Processing Systems 35: Annual Conference on Neural Information Processing Systems 2022, NeurIPS 2022}, 2022.

\bibitem[Merrill \& Sabharwal(2023)Merrill and Sabharwal]{merrill2023parallelism}
Merrill, W. and Sabharwal, A.
\newblock The parallelism tradeoff: Limitations of log-precision transformers, 2023.

\bibitem[Merrill \& Sabharwal(2024)Merrill and Sabharwal]{merrill23expressive}
Merrill, W. and Sabharwal, A.
\newblock The expressive power of transformers with chain of thought.
\newblock In \emph{The Twelfth International Conference on Learning Representations}, 2024.

\bibitem[Momennejad et~al.(2023)Momennejad, Hasanbeig, Frujeri, Sharma, Ness, Jojic, Palangi, and Larson]{momennejad23eval}
Momennejad, I., Hasanbeig, H., Frujeri, F.~V., Sharma, H., Ness, R.~O., Jojic, N., Palangi, H., and Larson, J.
\newblock Evaluating cognitive maps and planning in large language models with cogeval.
\newblock \emph{Advances in Neural Information Processing Systems}, 36, 2023.

\bibitem[Monea et~al.(2023)Monea, Joulin, and Grave]{monea23pass}
Monea, G., Joulin, A., and Grave, E.
\newblock Pass: Parallel speculative sampling.
\newblock \emph{3rd Workshop on Efficient Natural Language and Speech Processing (NeurIPS 2023)}, 2023.

\bibitem[Nye et~al.(2021)Nye, Andreassen, Gur{-}Ari, Michalewski, Austin, Bieber, Dohan, Lewkowycz, Bosma, Luan, Sutton, and Odena]{nye21scratchpad}
Nye, M.~I., Andreassen, A.~J., Gur{-}Ari, G., Michalewski, H., Austin, J., Bieber, D., Dohan, D., Lewkowycz, A., Bosma, M., Luan, D., Sutton, C., and Odena, A.
\newblock Show your work: Scratchpads for intermediate computation with language models.
\newblock \emph{arXiv preprint arXiv:2112.00114}, 2021.

\bibitem[Olsson et~al.(2022)Olsson, Elhage, Nanda, Joseph, DasSarma, Henighan, Mann, Askell, Bai, Chen, et~al.]{olsson22icl}
Olsson, C., Elhage, N., Nanda, N., Joseph, N., DasSarma, N., Henighan, T., Mann, B., Askell, A., Bai, Y., Chen, A., et~al.
\newblock In-context learning and induction heads.
\newblock \emph{arXiv preprint arXiv:2209.11895}, 2022.

\bibitem[Ouyang et~al.(2022)Ouyang, Wu, Jiang, Almeida, Wainwright, Mishkin, Zhang, Agarwal, Slama, Ray, Schulman, Hilton, Kelton, Miller, Simens, Askell, Welinder, Christiano, Leike, and Lowe]{ouyang19rl}
Ouyang, L., Wu, J., Jiang, X., Almeida, D., Wainwright, C.~L., Mishkin, P., Zhang, C., Agarwal, S., Slama, K., Ray, A., Schulman, J., Hilton, J., Kelton, F., Miller, L., Simens, M., Askell, A., Welinder, P., Christiano, P.~F., Leike, J., and Lowe, R.
\newblock Training language models to follow instructions with human feedback.
\newblock In \emph{Advances in Neural Information Processing Systems 35: Annual Conference on Neural Information Processing Systems 2022, NeurIPS 2022}, 2022.

\bibitem[Pal et~al.(2023)Pal, Sun, Yuan, Wallace, and Bau]{pal23future}
Pal, K., Sun, J., Yuan, A., Wallace, B.~C., and Bau, D.
\newblock Future lens: Anticipating subsequent tokens from a single hidden state.
\newblock In \emph{Proceedings of the 27th Conference on Computational Natural Language Learning, CoNLL 2023}. Association for Computational Linguistics, 2023.

\bibitem[Papadopoulos et~al.(2024)Papadopoulos, Wenger, and Hongler]{papadopoulos2024arrows}
Papadopoulos, V., Wenger, J., and Hongler, C.
\newblock Arrows of time for large language models, 2024.

\bibitem[Paulus et~al.(2018)Paulus, Xiong, and Socher]{paulus18deep}
Paulus, R., Xiong, C., and Socher, R.
\newblock A deep reinforced model for abstractive summarization.
\newblock In \emph{6th International Conference on Learning Representations, {ICLR} 2018, Conference Track Proceedings}. OpenReview.net, 2018.

\bibitem[Pezeshki et~al.(2021)Pezeshki, Kaba, Bengio, Courville, Precup, and Lajoie]{pezeshki21starvation}
Pezeshki, M., Kaba, S., Bengio, Y., Courville, A.~C., Precup, D., and Lajoie, G.
\newblock Gradient starvation: {A} learning proclivity in neural networks.
\newblock In \emph{Advances in Neural Information Processing Systems 34: Annual Conference on Neural Information Processing Systems 2021, NeurIPS 2021, December 6-14, 2021, virtual}, pp.\  1256--1272, 2021.

\bibitem[Pfau et~al.(2023)Pfau, Infanger, Sheshadri, Panda, Michael, and Huebner]{pfau2023eliciting}
Pfau, J., Infanger, A., Sheshadri, A., Panda, A., Michael, J., and Huebner, C.
\newblock Eliciting language model behaviors using reverse language models.
\newblock In \emph{Socially Responsible Language Modelling Research}, 2023.

\bibitem[Pfungst \& Rahn(1911)Pfungst and Rahn]{bhlitem116908}
Pfungst, O. and Rahn, C.~L.
\newblock \emph{Clever Hans (the horse of Mr. Von Osten) a contribution to experimental animal and human psychology}.
\newblock New York, H. Holt and company, 1911.

\bibitem[Piekos et~al.(2021)Piekos, Malinowski, and Michalewski]{piekos20measuring}
Piekos, P., Malinowski, M., and Michalewski, H.
\newblock Measuring and improving bert's mathematical abilities by predicting the order of reasoning.
\newblock In \emph{Proceedings of the 59th Annual Meeting of the Association for Computational Linguistics and the 11th International Joint Conference on Natural Language Processing, {ACL/IJCNLP} 2021, (Volume 2: Short Papers), Virtual Event, August 1-6, 2021}, pp.\  383--394. Association for Computational Linguistics, 2021.

\bibitem[Power et~al.(2022)Power, Burda, Edwards, Babuschkin, and Misra]{power2022grokking}
Power, A., Burda, Y., Edwards, H., Babuschkin, I., and Misra, V.
\newblock Grokking: Generalization beyond overfitting on small algorithmic datasets, 2022.

\bibitem[Qi et~al.(2020)Qi, Yan, Gong, Liu, Duan, Chen, Zhang, and Zhou]{qi20prophet}
Qi, W., Yan, Y., Gong, Y., Liu, D., Duan, N., Chen, J., Zhang, R., and Zhou, M.
\newblock Prophetnet: Predicting future n-gram for sequence-to-sequence pre-training.
\newblock In \emph{Findings of the Association for Computational Linguistics: {EMNLP} 2020, Online Event, 16-20 November 2020}, volume {EMNLP} 2020 of \emph{Findings of {ACL}}, pp.\  2401--2410, 2020.

\bibitem[Radford et~al.(2019)Radford, Wu, Child, Luan, Amodei, and Sutskever]{Radford2019LanguageMA}
Radford, A., Wu, J., Child, R., Luan, D., Amodei, D., and Sutskever, I.
\newblock Language models are unsupervised multitask learners.
\newblock 2019.

\bibitem[Ranaldi \& Zanzotto(2023)Ranaldi and Zanzotto]{ranaldi2023hans}
Ranaldi, L. and Zanzotto, F.~M.
\newblock Hans, are you clever? clever hans effect analysis of neural systems, 2023.

\bibitem[Ranzato et~al.(2016)Ranzato, Chopra, Auli, and Zaremba]{ranzato16seq}
Ranzato, M., Chopra, S., Auli, M., and Zaremba, W.
\newblock Sequence level training with recurrent neural networks.
\newblock In \emph{4th International Conference on Learning Representations, {ICLR} 2016, Conference Track Proceedings}, 2016.

\bibitem[Recchia(2021)]{recchia21teaching}
Recchia, G.
\newblock Teaching autoregressive language models complex tasks by demonstration.
\newblock \emph{arXiv preprint arXiv:2109.02102}, 2021.

\bibitem[Reynolds \& McDonell(2021)Reynolds and McDonell]{reynolds2021prompt}
Reynolds, L. and McDonell, K.
\newblock Prompt programming for large language models: Beyond the few-shot paradigm.
\newblock In \emph{Extended Abstracts of the 2021 CHI Conference on Human Factors in Computing Systems}, 2021.

\bibitem[Ross \& Bagnell(2010)Ross and Bagnell]{ross10efficient}
Ross, S. and Bagnell, D.
\newblock Efficient reductions for imitation learning.
\newblock In Teh, Y.~W. and Titterington, D.~M. (eds.), \emph{Proceedings of the Thirteenth International Conference on Artificial Intelligence and Statistics, {AISTATS} 2010, Chia Laguna Resort, Sardinia, Italy, May 13-15, 2010}, volume~9 of \emph{{JMLR} Proceedings}, 2010.

\bibitem[Ross \& Bagnell(2014)Ross and Bagnell]{ross14aggrevate}
Ross, S. and Bagnell, J.~A.
\newblock Reinforcement and imitation learning via interactive no-regret learning.
\newblock abs/1406.5979, 2014.

\bibitem[Ross et~al.(2011)Ross, Gordon, and Bagnell]{ross11reduction}
Ross, S., Gordon, G.~J., and Bagnell, D.
\newblock A reduction of imitation learning and structured prediction to no-regret online learning.
\newblock In \emph{Proceedings of the Fourteenth International Conference on Artificial Intelligence and Statistics, {AISTATS} 2011}, {JMLR} Proceedings, 2011.

\bibitem[Sanford et~al.(2024)Sanford, Hsu, and Telgarsky]{sanford2024transformers}
Sanford, C., Hsu, D., and Telgarsky, M.
\newblock Transformers, parallel computation, and logarithmic depth.
\newblock \emph{arXiv preprint arXiv:2402.09268}, 2024.

\bibitem[Shah et~al.(2020)Shah, Tamuly, Raghunathan, Jain, and Netrapalli]{shah20simplicity}
Shah, H., Tamuly, K., Raghunathan, A., Jain, P., and Netrapalli, P.
\newblock The pitfalls of simplicity bias in neural networks.
\newblock In \emph{Advances in Neural Information Processing Systems 33: Annual Conference on Neural Information Processing Systems 2020, NeurIPS 2020, December 6-12, 2020, virtual}, 2020.

\bibitem[Shalev{-}Shwartz \& Shashua(2016)Shalev{-}Shwartz and Shashua]{shwartz16endtoend}
Shalev{-}Shwartz, S. and Shashua, A.
\newblock On the sample complexity of end-to-end training vs. semantic abstraction training.
\newblock \emph{arXiv preprint arXiv:1604.06915}, 2016.

\bibitem[Shalev{-}Shwartz et~al.(2017)Shalev{-}Shwartz, Shamir, and Shammah]{shwartz17failures}
Shalev{-}Shwartz, S., Shamir, O., and Shammah, S.
\newblock Failures of gradient-based deep learning.
\newblock In \emph{Proceedings of the 34th International Conference on Machine Learning, {ICML} 2017, Sydney, NSW, Australia, 6-11 August 2017}, volume~70 of \emph{Proceedings of Machine Learning Research}, pp.\  3067--3075. {PMLR}, 2017.

\bibitem[Shannon(1948)]{shannon48ntp}
Shannon, C.~E.
\newblock A mathematical theory of communication.
\newblock \emph{The Bell System Technical Journal}, 27\penalty0 (3):\penalty0 379--423, 1948.

\bibitem[Shannon(1951)]{shannon51english}
Shannon, C.~E.
\newblock Prediction and entropy of printed english.
\newblock \emph{The Bell System Technical Journal}, 30\penalty0 (1):\penalty0 50--64, 1951.

\bibitem[Shen et~al.(2023)Shen, Bubeck, Eldan, Lee, Li, and Zhang]{shen23positional}
Shen, R., Bubeck, S., Eldan, R., Lee, Y.~T., Li, Y., and Zhang, Y.
\newblock Positional description matters for transformers arithmetic.
\newblock \emph{arXiv preprint arXiv:2311.14737}, 2023.

\bibitem[Shinn et~al.(2023)Shinn, Cassano, Berman, Gopinath, Narasimhan, and Yao]{shinn2023reflexion}
Shinn, N., Cassano, F., Berman, E., Gopinath, A., Narasimhan, K., and Yao, S.
\newblock Reflexion: Language agents with verbal reinforcement learning, 2023.

\bibitem[Shlegeris et~al.(2022)Shlegeris, Roger, Chan, and McLean]{shlegeris22language}
Shlegeris, B., Roger, F., Chan, L., and McLean, E.
\newblock Language models are better than humans at next-token prediction.
\newblock \emph{arXiv preprint arXiv:2212.11281}, 2022.

\bibitem[Shridhar et~al.(2022)Shridhar, Stolfo, and Sachan]{shridhar2022distilling}
Shridhar, K., Stolfo, A., and Sachan, M.
\newblock Distilling reasoning capabilities into smaller language models.
\newblock \emph{arXiv preprint arXiv:2212.00193}, 2022.

\bibitem[Springer et~al.(2024)Springer, Kotha, Fried, Neubig, and Raghunathan]{springer2024repetition}
Springer, J.~M., Kotha, S., Fried, D., Neubig, G., and Raghunathan, A.
\newblock Repetition improves language model embeddings, 2024.

\bibitem[Stern et~al.(2018)Stern, Shazeer, and Uszkoreit]{stern18blockwise}
Stern, M., Shazeer, N., and Uszkoreit, J.
\newblock Blockwise parallel decoding for deep autoregressive models.
\newblock In \emph{Advances in Neural Information Processing Systems 31: Annual Conference on Neural Information Processing Systems 2018, NeurIPS 2018, December 3-8, 2018, Montr{\'{e}}al, Canada}, 2018.

\bibitem[Thrampoulidis(2024)]{thrampoulidis2024implicit}
Thrampoulidis, C.
\newblock Implicit bias of next-token prediction, 2024.

\bibitem[Touvron et~al.(2023)Touvron, Martin, Stone, Albert, Almahairi, Babaei, Bashlykov, Batra, Bhargava, Bhosale, et~al.]{touvron2023llama}
Touvron, H., Martin, L., Stone, K., Albert, P., Almahairi, A., Babaei, Y., Bashlykov, N., Batra, S., Bhargava, P., Bhosale, S., et~al.
\newblock Llama 2: Open foundation and fine-tuned chat models.
\newblock \emph{arXiv preprint arXiv:2307.09288}, 2023.

\bibitem[Tschannen et~al.(2023)Tschannen, Kumar, Steiner, Zhai, Houlsby, and Beyer]{tschannen23parallel}
Tschannen, M., Kumar, M., Steiner, A., Zhai, X., Houlsby, N., and Beyer, L.
\newblock Image captioners are scalable vision learners too.
\newblock In \emph{Advances in Neural Information Processing Systems 36: Annual Conference on Neural Information Processing Systems 2023, NeurIPS 2023, New Orleans, LA, USA, December 10 - 16, 2023}, 2023.

\bibitem[Valmeekam et~al.(2023{\natexlab{a}})Valmeekam, Marquez, and Kambhampati]{valmeekam23critique}
Valmeekam, K., Marquez, M., and Kambhampati, S.
\newblock Can large language models really improve by self-critiquing their own plans?
\newblock \emph{arXiv preprint arXiv:2310.08118}, 2023{\natexlab{a}}.

\bibitem[Valmeekam et~al.(2023{\natexlab{b}})Valmeekam, Marquez, Olmo, Sreedharan, and Kambhampati]{valmeekam2023planbench}
Valmeekam, K., Marquez, M., Olmo, A., Sreedharan, S., and Kambhampati, S.
\newblock Planbench: An extensible benchmark for evaluating large language models on planning and reasoning about change, 2023{\natexlab{b}}.

\bibitem[Valmeekam et~al.(2023{\natexlab{c}})Valmeekam, Marquez, Sreedharan, and Kambhampati]{planning23valmeekam}
Valmeekam, K., Marquez, M., Sreedharan, S., and Kambhampati, S.
\newblock On the planning abilities of large language models - {A} critical investigation.
\newblock In \emph{Advances in Neural Information Processing Systems 36: Annual Conference on Neural Information Processing Systems 2023, NeurIPS 2023, New Orleans, LA, USA, December 10 - 16, 2023}, 2023{\natexlab{c}}.

\bibitem[Vaswani et~al.(2017)Vaswani, Shazeer, Parmar, Uszkoreit, Jones, Gomez, Kaiser, and Polosukhin]{vaswani17attention}
Vaswani, A., Shazeer, N., Parmar, N., Uszkoreit, J., Jones, L., Gomez, A.~N., Kaiser, L., and Polosukhin, I.
\newblock Attention is all you need.
\newblock In \emph{Advances in Neural Information Processing Systems 30: Annual Conference on Neural Information Processing Systems 2017, December 4-9, 2017, Long Beach, CA, {USA}}, pp.\  5998--6008, 2017.

\bibitem[Wei et~al.(2022)Wei, Wang, Schuurmans, Bosma, Ichter, Xia, Chi, Le, and Zhou]{wei22cot}
Wei, J., Wang, X., Schuurmans, D., Bosma, M., Ichter, B., Xia, F., Chi, E.~H., Le, Q.~V., and Zhou, D.
\newblock Chain-of-thought prompting elicits reasoning in large language models.
\newblock In \emph{Advances in Neural Information Processing Systems 35: Annual Conference on Neural Information Processing Systems 2022, NeurIPS 2022}, 2022.

\bibitem[Welleck et~al.(2020)Welleck, Kulikov, Kim, Pang, and Cho]{welleck2020consistency}
Welleck, S., Kulikov, I., Kim, J., Pang, R.~Y., and Cho, K.
\newblock Consistency of a recurrent language model with respect to incomplete decoding.
\newblock In \emph{Proceedings of the 2020 Conference on Empirical Methods in Natural Language Processing (EMNLP)}, November 2020.

\bibitem[Wies et~al.(2023)Wies, Levine, and Shashua]{weis23subtask}
Wies, N., Levine, Y., and Shashua, A.
\newblock Sub-task decomposition enables learning in sequence to sequence tasks.
\newblock In \emph{The Eleventh International Conference on Learning Representations, {ICLR} 2023}, 2023.

\bibitem[Williams \& Zipser(1989)Williams and Zipser]{williams89tf}
Williams, R.~J. and Zipser, D.
\newblock A learning algorithm for continually running fully recurrent neural networks.
\newblock \emph{Neural Computation}, 1\penalty0 (2):\penalty0 270--280, 1989.

\bibitem[Wu et~al.(2016)Wu, Schuster, Chen, Le, Norouzi, Macherey, Krikun, Cao, Gao, Macherey, et~al.]{wu16nmt}
Wu, Y., Schuster, M., Chen, Z., Le, Q.~V., Norouzi, M., Macherey, W., Krikun, M., Cao, Y., Gao, Q., Macherey, K., et~al.
\newblock Google's neural machine translation system: Bridging the gap between human and machine translation.
\newblock \emph{arXiv preprint arXiv:1609.08144}, 2016.

\bibitem[Xue et~al.(2023)Xue, Likhosherstov, Arnab, Houlsby, Dehghani, and You]{xue2023adaptive}
Xue, F., Likhosherstov, V., Arnab, A., Houlsby, N., Dehghani, M., and You, Y.
\newblock Adaptive computation with elastic input sequence.
\newblock In \emph{International Conference on Machine Learning, {ICML} 2023}, Proceedings of Machine Learning Research. {PMLR}, 2023.

\bibitem[Yao et~al.(2023{\natexlab{a}})Yao, Yu, Zhao, Shafran, Griffiths, Cao, and Narasimhan]{yao23tree}
Yao, S., Yu, D., Zhao, J., Shafran, I., Griffiths, T., Cao, Y., and Narasimhan, K.
\newblock Tree of thoughts: Deliberate problem solving with large language models.
\newblock \emph{Advances in Neural Information Processing Systems}, 36, 2023{\natexlab{a}}.

\bibitem[Yao et~al.(2023{\natexlab{b}})Yao, Zhao, Yu, Du, Shafran, Narasimhan, and Cao]{yao23react}
Yao, S., Zhao, J., Yu, D., Du, N., Shafran, I., Narasimhan, K.~R., and Cao, Y.
\newblock React: Synergizing reasoning and acting in language models.
\newblock In \emph{The Eleventh International Conference on Learning Representations, {ICLR} 2023}, 2023{\natexlab{b}}.

\bibitem[Young \& You(2022)Young and You]{inconsistencies23young}
Young, T. and You, Y.
\newblock On the inconsistencies of conditionals learned by masked language models.
\newblock \emph{arXiv preprint arXiv:2301.00068}, 2022.

\bibitem[Zelikman et~al.(2022)Zelikman, Wu, Mu, and Goodman]{zelikman22star}
Zelikman, E., Wu, Y., Mu, J., and Goodman, N.~D.
\newblock Star: Bootstrapping reasoning with reasoning.
\newblock In \emph{Advances in Neural Information Processing Systems 35: Annual Conference on Neural Information Processing Systems 2022, NeurIPS 2022, New Orleans, LA, USA, November 28 - December 9, 2022}, 2022.

\bibitem[Zhang et~al.(2023)Zhang, Li, Meng, Chang, and den Broeck]{zhang23paradox}
Zhang, H., Li, L.~H., Meng, T., Chang, K., and den Broeck, G.~V.
\newblock On the paradox of learning to reason from data.
\newblock In \emph{Proceedings of the Thirty-Second International Joint Conference on Artificial Intelligence, {IJCAI} 2023, 19th-25th August 2023, Macao, SAR, China}, pp.\  3365--3373. ijcai.org, 2023.

\bibitem[Zhao et~al.(2023)Zhao, Kumar, Levine, and Finn]{zhao23act}
Zhao, T.~Z., Kumar, V., Levine, S., and Finn, C.
\newblock Learning fine-grained bimanual manipulation with low-cost hardware.
\newblock In \emph{Robotics: Science and Systems XIX, Daegu, Republic of Korea, July 10-14, 2023}, 2023.

\bibitem[Ziegler et~al.(2019)Ziegler, Stiennon, Wu, Brown, Radford, Amodei, Christiano, and Irving]{ziegler19rl}
Ziegler, D.~M., Stiennon, N., Wu, J., Brown, T.~B., Radford, A., Amodei, D., Christiano, P., and Irving, G.
\newblock Fine-tuning language models from human preferences.
\newblock \emph{arXiv preprint arXiv:1909.08593}, 2019.

\end{thebibliography}
\bibliographystyle{icml2024}

\newpage
\appendix

\onecolumn

\section{Limitations}
\label{sec:limitations}
\begin{enumerate}
    \item Our arguments are empirical and conceptual. We have not provided
a formal proof for our arguments. %
    \item We have also not demonstrated failure for very large models such as \texttt{Llama2} \citep{touvron2023llama} or \texttt{Mistral} \citep{jiang2023mistral}.  
    \item We note that there may be specific workarounds to make the Transformer (efficiently) learn the  path-star task. For example, it may become solvable with other pre-trained models which may have been taught path-finding with step-by-step supervision (see Remark~\ref{rem:indecipherable}). The task may also be solvable via other workarounds such as in-context learning or multi-modal learning (where the model visually processes the image of the graph).  
    Furthermore, since the path-star problem composes the same subroutine over itself, it allows efficient parallelizable solutions \citet{sanford2024transformers} that may not require learning a sequential composition of discrete subroutines. This can break the assumption of Proposition~\ref{prop:indecipherable}, making the optimization tractable.  We warn the reader that these are however only specialized workarounds for this specific task; our broader point is that (a) next-token-prediction/teacher-forcing may still be relatively inefficient compared to teacherless training and (b) there may still be other novel tasks not seen during pre-training, or not solvable visually, or where no paralleizable solutions exist, for which similar failures may occur. We discuss this in detail in Remark~\ref{rem:indecipherable}.
    \item Nevertheless, beyond the minimal path-finding setting, we have not demonstrated or characterized the range of problems where teacher-forcing-induced failure may occur. We only intuitively believe it should extend to other problem-solving tasks and creative-writing tasks that require lookahead (see \S~\ref{sec:story}).
    \item  It is also  unclear if this failure generalizes to run-of-the-mill text-generation tasks. %
    
\end{enumerate}

\section{Teacher-Forcing Failure and Snowballing Failure are Distinct}
\label{sec:diff}

We emphasize that, while both the Clever Hans failure mode and the Snowball mode are both indicative of the inability to plan, these failure modes are also orthogonal to each other, and demand different solutions. We make this a bit more formal:

\begin{proposition}
In the path finding problem of \S\ref{sec:minimal-task}, there exists a next-token predictor that experiences Failures~\ref{fail:clever},~\ref{fail:diff-token} due to teacher-forcing, but not the snowballing error Failure~\ref{fail:snowball} due to autoregressive inference. Conversely, there exists a next-token predictor that experiences the latter failure but not the former. 
\end{proposition}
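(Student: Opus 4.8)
The statement is an ``existence in both directions'' claim, so the plan is to exhibit two concrete next-token predictors on the path-star instance and verify for each that exactly one of the two failure modes occurs. For the first direction (teacher-forcing failures but no snowballing), the plan is to define $\llm$ to be the ``Clever Hans'' predictor: on input $(\prefix, \response_{<i})$ it ignores $\goal$ entirely, scans $\asseq(G)$ for the unique outgoing edge from $\response_{i-1}$, and outputs the other endpoint; on input $\response_{<1}=\start$ (where there is no single outgoing edge) it outputs a uniformly random neighbor of $\start$. First I would check that this $\llm$ exhibits Failure~\ref{fail:clever} (by construction it fits every token after $\responsetoken_1$ via the cheat) and Failure~\ref{fail:diff-token} (it has $\firstacc \approx 1/d$, i.e.\ it never learned $\responsetoken_1$). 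Then I would check that it does \emph{not} satisfy the hypothesis of Failure~\ref{fail:snowball}: the per-step error in Eq~\ref{eq:snowball-condition} is \emph{not} uniformly $\approx\epsilon$ for small $\epsilon$ --- it is $\approx 1-1/d$ on the very first token and $\approx 0$ afterwards. So the snowballing analysis simply does not apply; the failure is concentrated ``instantaneously'' at step $1$ rather than compounding. (If one wants to be careful that ``not Failure~\ref{fail:snowball}'' is a meaningful statement, I would phrase it as: the model violates the premise Eq~\ref{eq:snowball-condition}, hence the snowball mechanism is vacuous for it, matching the remark in \S\ref{sec:related} that here the error happens at the start of inference.)

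For the converse direction (snowballing but no teacher-forcing failure), the plan is to take $\llm$ to be a \emph{near-perfect} next-token predictor: on any teacher-forced input $(\prefix, \response_{<i})$ --- including $i=1$ --- it outputs the correct ground-truth token $\responsetoken_i$ with probability $1-\epsilon$ and an incorrect neighbor otherwise, for some small fixed $\epsilon>0$. By Fact~\ref{fact:crp}, such a predictor is representable (it is an $\epsilon$-perturbation of the exact chain-rule predictor), so this is a legitimate instance. First I would observe that this model does \emph{not} exhibit Failure~\ref{fail:clever} or~\ref{fail:diff-token}: it has learned the true mechanism for every token, in particular it predicts the Indecipherable first token $\responsetoken_1$ with accuracy $1-\epsilon$, so there is no lost supervision and no reliance on the Clever Hans cheat. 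Then I would note that it \emph{does} satisfy Eq~\ref{eq:snowball-condition} with error $\epsilon$ at every step, so by Failure~\ref{fail:snowball}, $\mathbb{P}(\hat\response=\response)\approx(1-\epsilon)^{\responselen}$, which degrades with path length $\responselen = l$ --- the snowballing failure. This direction is essentially immediate once the right model is written down.

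The main obstacle, such as it is, is precision rather than difficulty: ``not experiencing Failure~\ref{fail:snowball}'' needs a careful reading because Failure~\ref{fail:snowball} is stated as an implication with a hypothesis (Eq~\ref{eq:snowball-condition}). I would resolve this by making the claim concrete: in the first construction the per-step error profile is $(\approx 1-1/d, 0, 0, \dots)$ rather than $(\epsilon,\epsilon,\dots)$, so the ``exact-match accuracy $\approx (1-\epsilon)^{L}$'' conclusion of Failure~\ref{fail:snowball} does not hold --- the accuracy is $\approx 1/d$ regardless of $l$, i.e.\ the error does not compound with length, it is already present at step one. Conversely in the second construction the error profile is exactly uniform, the Failure~\ref{fail:snowball} conclusion holds, and yet the next-token predictor is accurate everywhere including on the ``difficult'' token, so Failures~\ref{fail:clever} and~\ref{fail:diff-token} provably do not occur. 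A minor technical point to dispatch along the way is that in the first construction the Clever Hans predictor genuinely cannot do better than $1/d$ on $\responsetoken_1$ given only $\prefix$ \emph{if} we further posit it has not memorized training graphs --- but since we are free to choose the model, I would simply define it to guess uniformly, sidestepping any learning-theoretic argument and keeping the proof a pure existence construction.
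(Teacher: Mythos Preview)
Your proposal is correct and follows essentially the same approach as the paper: for the first direction you exhibit the Clever Hans predictor (uniform guess on $\responsetoken_1$, deterministic edge-following thereafter) and argue the error is concentrated at step one rather than compounding, and for the converse you take a uniformly $\epsilon$-noisy version of the true predictor. Your treatment is in fact more explicit than the paper's, particularly in unpacking what ``not experiencing Failure~\ref{fail:snowball}'' means via the violation of the premise Eq~\ref{eq:snowball-condition}; the paper's proof leaves this implicit and simply observes that the error ``is not from an accumulation over length.''
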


\begin{proof}
Consider the model learned via teacher-forcing on the graph problem. During inference, we saw that it suffers a debilitating error right in the first step (with accuracy of $1/d$ for degree $d$ of the start node). Thus, during inference the error that is experienced is not from an accumulation over length. In fact, if only the first node is set correctly during inference, a model with the perfect Clever Hans cheat, would achieve $100\%$ accuracy rate. Such a model does not experience snowballing errors. 

On the other hand, consider a model, that in each step predicts the correct next vertex with a high accuracy of $1-\epsilon$ for small $\epsilon$. Such a model clearly has learned the correct plan, albeit with minor errors in each token. These errors however can snowball during inference. Thus, this model has no failure due to teacher-forcing, but will fail during autoregressive inference, if the path length is long. 
\end{proof}

\textbf{Differing solutions.} Based on the above simple illustration, we note that the two failures need different solution approaches. Specifically, while snowballing errors may be fixable via ``backtracking-and-planning'' wrappers, teacher-forcing failures is a pathology that cannot be solved post-hoc.

\section{An Illustration via Story-Telling}
\label{sec:story}
 Can a teacher-forced model merely trained on thousands of stories learn to write plot twists? Indeed, \citet{bubeck23sparks} report instances where models can fail to accomplish tasks involving creative-writing (e.g., poems). We speculatively extend our discussion in \S\ref{sec:cleverhans} to reason about this scenario. Consider for example, teacher-forcing on the following story that follows an often-used plot outline:

{ \small 
\begin{itemize}
    \item \texttt{Event 1 (Setup):} Alex and Bob, who are friends, are trying to defeat the Evil King.
    \item \texttt{Event 2 (Conflict):} One day, surprisingly, Bob  turns against Alex, and tries to thwart Alex's plans, {\em albeit unsuccessfully}.
    \item \texttt{Event 3:} Alex thinks Bob is evil too, defeats Bob first.
    \item \texttt{Event 4 (Backstory):} Losing the battle, Bob reveals he is a double-agent. In his final words, Bob explains he was ordered to defeat Alex.
    \item \texttt{Event 5 (Resolution): }  To preserve the King's trust, Bob obeyed the command, but also \textit{deliberately} failed at it. Bob then relays critical information he extracted from the King's inner circles. 
    \item \texttt{Event 6:} Alex uses Bob's insider information to defeat the King.  
\end{itemize}
}

Evidently, this story requires a plan: \texttt{Event 5} is a key plot resolution that the narrator must have planned before methodically generating parts of the setup in \texttt{Event 1} (introducing Bob as a friend) and the conflict in \texttt{Event 2} (Bob's turning against Alex, and failing at it).  While training, the model must thus treat the story as a whole, and tease apart these dependencies between the events, some of which may be anti-chronological (akin to how, in the path-star graph, the model must learn that the problem is straightforwardly solvable when viewed from right-to-left). 

However, we hypothesize that a teacher-forced model would take a rigid chronological (left-to-right) view. First, it would use the Clever Hans cheat to easily fit the plot resolution in \texttt{Event 5}: the model would use the facts of \texttt{Event 4} and \texttt{Event 2} (revealed as input) to fit the content of Bob's final words. Thus, the content of \texttt{Event 5} would no longer be available as supervision to guide how the model fits \texttt{Event 1} and \texttt{Event 2}. When the model tries to fit these earlier events, these events would become Indecipherable Tokens --- the model would simply learn to fit them as arbitrary events. Thus, we conjecture that a model trained via teacher-forcing merely on raw, unannotated texts of stories --- however many stories they may be --- would not learn to plan its stories, and would instead create arbitrary twists and turns during inference, and improvize upon that.

\section{Other Remarks}

\begin{remark} \label{rem:indecipherable} \textbf{(Conditions under which first token becomes decipherable end-to-end)} There are certain corner cases where teacher-forcing can learn the (otherwise indecipherable) first token efficiently, without having to brute-force search an exponential space of algorithms. We enumerate these below. (Note though that  even if the problem does become tractable, it can still be the case in these problems, next-token prediction is less efficient than predicting multiple future tokens.)
\begin{enumerate}
    \item \textbf{Lucky prior biases:}  If the model happens to have been exposed to certain relevant kinds of supervision during pre-training, then the model will be biased towards a favorable part of the search space, and  chance upon the right algorithm much quicker. 
    \begin{enumerate}
        \item  If the model had witnessed the same task but with the correct step-by-step supervision, then the prior would assign high probability to the correct algorithm. (One can imagine that the the true end-to-end algorithm itself becomes a readily available subroutine in this case.)%
        \item Or, in the specific path-star example, if the prior bias assigns high probability to all $l$ subroutines being identical, then the search only needs $O(|\mathcal{C}|)$ time (where $\mathcal{C}$ is the set of candidate subroutines.) For illustration, see the Transformer construction for k-hop problems in \cite{sanford2024transformers}.
    \end{enumerate}
    Note that in such a case, one may still be able to demonstrate intractability by constructing slight variations of the tasks that defy such prior biases (e.g., tasks where the subroutines are not identical).
    \item \textbf{Small graph size.} If the number of edges is very small (say $|E|$) in proportion to the training data, then the model can learn alternative solutions that ``memorize'' the problem:
        \begin{enumerate}
            \item \textbf{Naive memorization:} If the vocabulary has only $|\vocab|$ possible node ID's, then there are only $O(|\vocab|^{|E|})$ possible adjacency lists the model can see. So, if the training data has at least $\Omega(|\vocab|^{|E|})$ datapoints, then, every test example is seen with high probability during training. Here, the model merely needs to look up exact replicas from training, and regurgitate the subsequent values from the training string.
            \item \textbf{Node-ID-agnostic memorization:} If the number of training data is $\Omega(|E|!)$, the model can still implement a  form of memorization, but this requires a cleverer strategy that is agnostic to the node ID's. Concretely, for a fixed assignment of node ID's, there are only $O(|E|!)$ ways to permute the adjacency list. Assume that the model sees all such permutations during training (albeit with varying instantiations of the node IDs). During inference, the model can first look up whether the test input corresponds to an existing permutation it had witnessed (possibly with different node ID's). For example $1 \to 2; 2 \to 3; 4 \to 1$ would correspond to $10 \to 20; 20 \to 30; 40 \to 10$. Then the model merely needs to recall from its memory, the index where the target token is located in the input for that permutation. The model can then look at the same index in the test input and output the node ID located there. Thus, if the target token was $4$ for the first sequence above, the model can output $40$ for the other sequence.
        \end{enumerate}
    Note that this doesn't contradict Proposition~\ref{prop:indecipherable} because the proposition only precludes learning the ``true'' path-finding algorithm, not spurious memorizing solutions.
    \item \textbf{Small path length.}  If the path length $l$ is small, then either the search space of algorithms (which is about as large as $|\mathcal{C}|^l$) becomes tractable. 
        \begin{enumerate}
            \item For example, if $l=3$, the first node is the only intermediate node between the start and goal token. Here, the model can easily learn the ``right-to-left'' solution that the desired node is the only node preceding the goal node. 
        \end{enumerate}
\end{enumerate}

\end{remark}

\begin{remark} \label{rem:teacherles} \textbf{(Mechanism implemented by teacherless model)}
The (hypothetical) solution that the teacherless model must implement is a fairly difficult one to implement --- yet the model surprisingly learns to implement it. Recall that our hypothesis is that the teacherless model automatically learns to fit the targets in the reverse order, since the path \textit{from} $\goal$ is unique.  This is indeed what we find in Fig~\ref{fig:diff-traj}, where the accuracies of the later tokens become higher earlier. 
Note though that this is a fairly difficult computation to implement. First, when the model predicts $\node_{i}$, it must require the identity of $\node_{i+1}$. However, this identity is not fed as input to the model, in the absence of the teacher. Thus the model must have computed $\node_{i+1}$ and crucially, stored that in one of its its internal representations. Then, by induction, when predicting the first node $\node_{1}$, the model must know the identity of \textit{all} the other nodes in the path. In other words, the model must have (a) computed and (b) stored the whole solution in its hidden representations before it outputs the first token. This is a substantial type of lookahead that some of our models are able to achieve under teacherless training. 
\end{remark}

\section{Experiment Notations}

\subsection{Verifying In-Distribution Performance}
We simply compute exact match with the ground truth path as follows:
\begin{align}
    \agacc(\llm) := \mathbb{P}( \hat{\response} = \response)  \text{,} & & \prefix, \response \sim \distr,\hspace{2mm} \hat{\response} \agsample \llm. \label{eq:gen-acc} 
\end{align}

\subsection{Quantifying the Clever Hans Cheat}

 Formally, let $\texttt{Unif}(\mathcal{N}(\start))$ denote a uniform distribution over the set of adjacent nodes of $\start$. For any node $\node$ in the graph, denote by $\outpath(\node)$ the path emanating from $\node$ and going outwards, away from the start node. Notice that except for $\node = \start$, this path is unique.  We thus measure 
\begin{align*}
&
    \cheatacc(\llm) \coloneq \mathbb{P}\left( \hat{\path}_{1<}= \outpath(\node_1') \right) \numberthis  \label{eq:cheatacc}
\\[2mm] 
&\hspace{2mm}\text{where}\hspace{3mm}
    \prefix, \response \sim \distr, 
\hspace{2mm}
    \hat{\path}_{1<} \agsample \llm(\cdot ; \prefix, \start, \node_1' )  
\\[1mm] 
&\hspace{14mm}
    \node_1' \sim \texttt{Unif}(\mathcal{N}(\start)). 
\end{align*}

\subsection{Quantifying the Indecipherable Token Failure}

To quantify the Indecipherable Token failure, in our path-finding task, we measure the accuracy in predicting the first token after the start node.

\begin{align}
\firstacc(\llm) =
\mathbb{P}\left(\hat{\responsetoken}_1 = \responsetoken_1 \right), && \prefix, \response \sim \mathcal{D}, \hat{\response} \agsample \llm(\cdot;\prefix). \label{eq:first-acc}
\end{align}

\subsection{Inference in Teacherless Training}

In teacherless training, we make use of an uninformative input $\response^{\$}$ that simply corresponds to a series of dummy tokens denotes by $\$$. During inference, instead of autoregressing on the model's own output, we use this uninformative input. We formalize this below:

\begin{align}
    \hat{\response} \foresightsample \llm^{\$}(\cdot;\prefix) \text{ where } \hat{\responsetoken}_i \sim \llm^{\$}(\cdot;\prefix,  \response^{\$}_{<i}).
\end{align}

We then denote the accuracy of the model as follows:
\begin{align}
    \dollaracc(\llm^{\$}) = \mathbb{P}\left(\hat{\response}=\response \right)   &&  \prefix, \response \sim \mathcal{D}, \hspace{2mm}\hat{\response} \foresightsample \llm^{\$}(\cdot;\prefix). \label{eq:foresight-acc}
\end{align}

\subsection{Reversed Training}

Notationally, in reversed training we let $\llm^{\texttt{rev}}$ be the model trained to maximize $\ntpobj$ with the targets (and the teacher-forced inputs) set to  $\response^{\texttt{rev}}=\responsetoken_{\responselen}, \hdots \responsetoken_1$, the reversal of $\response$. We then measure the autoregressive accuracy by comparing against $\response^{\texttt{rev}}$:
\begin{align}
    \reverseacc(\llm^{\texttt{rev}}) = \mathbb{P}\left(\hat{\response}=\response^{\texttt{rev}}\right), && \; \prefix, \response \sim \mathcal{D}, \hat{\response} \agsample \llm^{\texttt{rev}}(\cdot;\prefix) \label{eq:rev-acc} 
\end{align}
\section{More Experimental Results}
\subsection{Snowball Failure}
\label{sec:snowball-expt}
To explicitly measure to what degree the model falls victim to the snowball effect, we train \textit{GPT-Mini} on graphs of various path lengths $l$. In order to remove the failure stemming from the difficult first token, we teacher-force the model for the first token and then check how accurate the generations are for subsequent tokens. More concretely, we evaluate
\begin{align*}
    &\snowballacc(\llm) = \mathbb{P}\left(\hat{\response}_{1<} = \response_{1<}\right) \numberthis \\[2mm]
    &\hspace{-5mm}\text{where }\hspace{2mm}  \prefix, \response \sim \mathcal{D}, \hspace{2mm}\hat{\response}_{1<} \agsample \llm(\cdot;\prefix, \responsetoken_1)  \nonumber
\end{align*}
If $\snowballacc(\llm)$ is $\approx 1$, then \textit{Failure \ref{fail:snowball}} is not prominent in our task. If $\snowballacc(\llm) \ll 1$, then clearly teacher-forcing is responsible for surpressing errors in generation, strongly hinting at the fact that \textit{Failure \ref{fail:snowball}} is at play. We display the results in Fig.~\ref{fig:snowball} (left). We observe that the accuracy $\snowballacc$ is barely affected even for graphs with very long paths $L=40$. 

As another metric, we proceed token by token during inference, and evaluate the probability of correctly predicting all tokens up to the current one. We report this for $G_{2, 40}$ in Fig.~\ref{fig:snowball} (right).  Similarly, while the success probability does decay for larger length (at an exponential rate), it remains very high due to the failure events being so unlikely. We thus conclude that \textit{Failure \ref{fail:snowball}} is not as prominent in this setting.

\begin{figure}[H]
    \centering
    \includegraphics[width=0.45\textwidth]{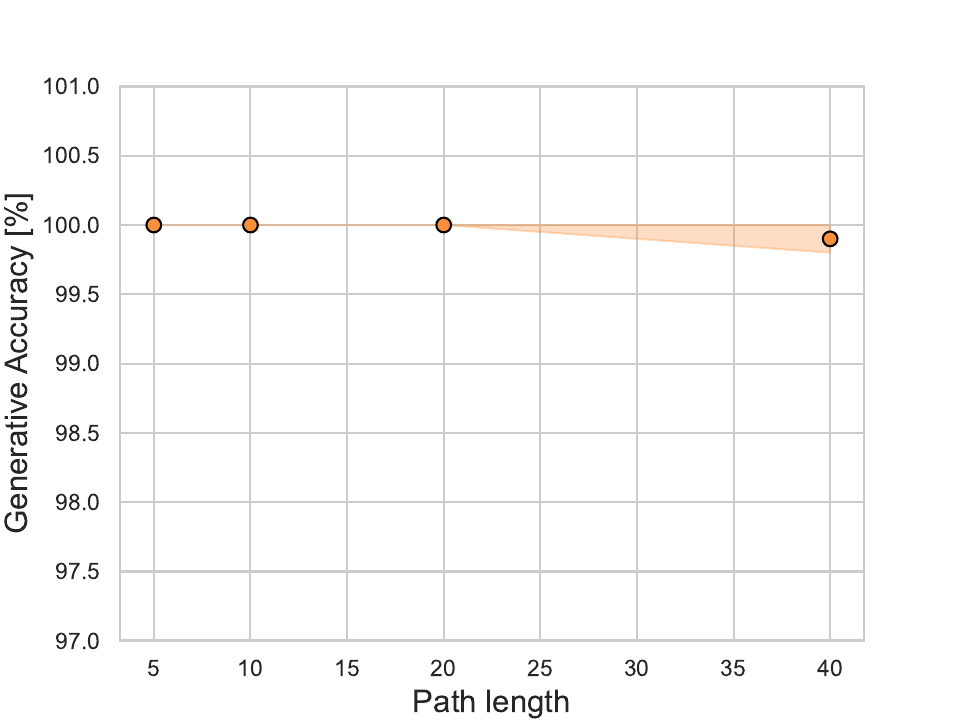}%
    \includegraphics[width=0.5\textwidth]{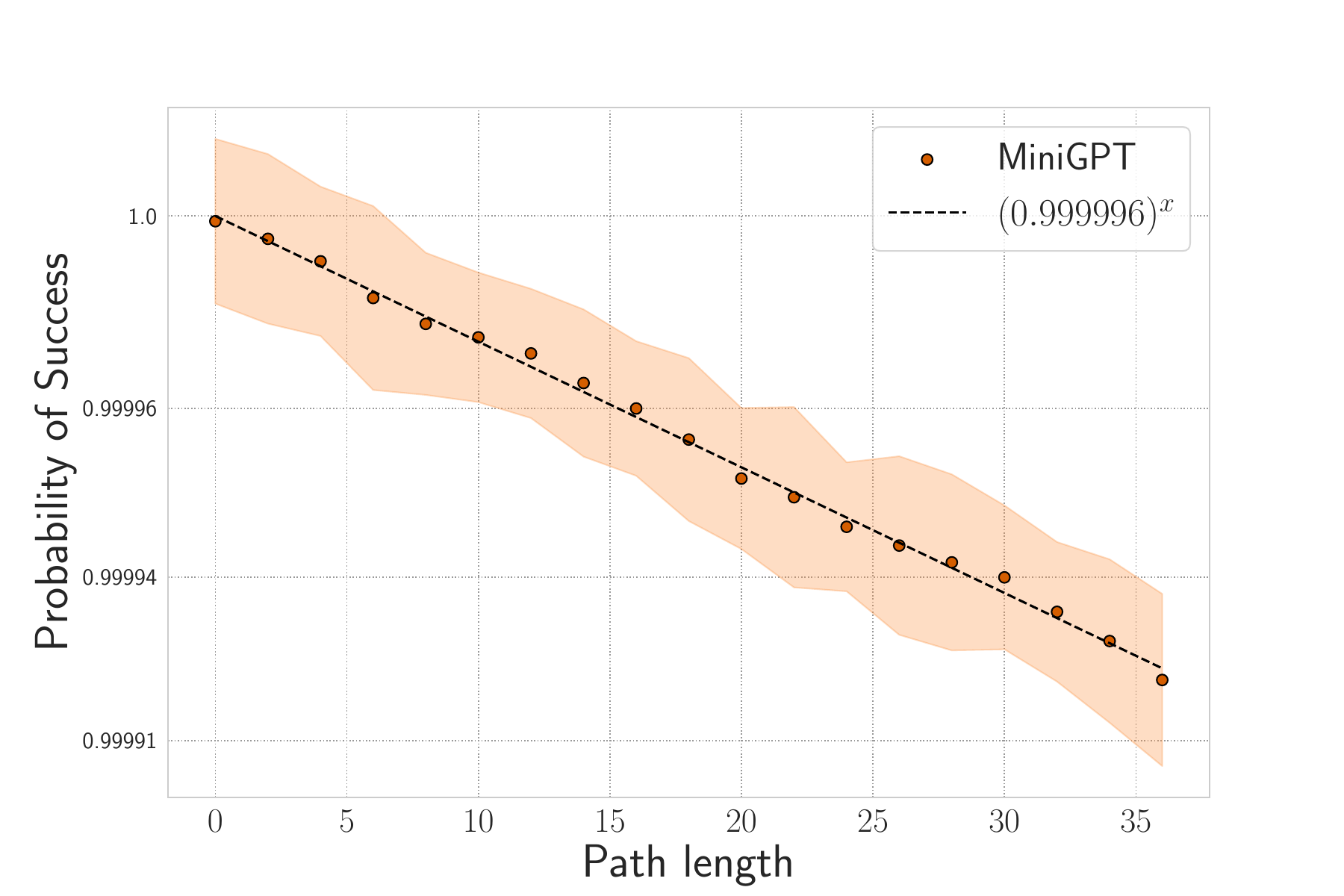}
    \caption{Accuracy of $\llm$ when conditioned on the first difficult token (left) for graphs of various length. Probability of correct prediction of $\llm$ as a function of current token position on $G_{2, 40}$, as we walk towards the goal.}
    \label{fig:snowball}
\end{figure}
\label{sec:more-results}
\subsection{Clever Hans Cheating Accuracies}
In Table~\ref{table:cheat-accs} we display the Clever Hans cheating accuracies $\cheatacc(\llm)$. We observe that in almost all cases, all the models achieve nearly perfect cheating accuracies. The only exception is the high-degree graph $G_{20,5}$ where all models struggle to even fit the training data.
\begin{table}[H]
\vskip 0.15in
\begin{center}
\begin{small}
\begin{sc}
\begin{tabular}{lccccc}
\toprule
  & $G_{2, 5}$ & $G_{2, 20}$ & $G_{5, 5}$ & $G_{10, 5}$ & $G_{20, 5}$ \\
\midrule
 GPT-Mini  & $99.7$ & $100$& $100$ & $99.8$ & $0.0$\\
 \midrule 
 GPT2-Large & $99.8$ & $99.7$ & $100$ & $99.8$ & $0.0$\\
 \midrule
 Mamba & $97.6$ & $98.3$ & $99.5$ & $95.9$ & $0.0$ \\
\bottomrule
\end{tabular}
\end{sc}
\end{small}
\end{center}
\caption{Evaluating Clever Hans cheating  accuracies $\cheatacc(\llm)$ (in percent $\%$) for different types of graphs.}
\label{table:cheat-accs}
\end{table}
\subsection{More Detailed Accuracies}
We report more detailed accuracy values per model in the following tables. We display standard accuracy $\agacc(\llm)$ in Table.~\ref{table:gen-acc}, teacherless accuracy $\dollaracc(\llm)$ in Table.~\ref{table:dollar-accs} and reverse accuracy $\reverseacc(\llm)$ in Table.~\ref{table:reverse-gen}. In general we observe that solving the task with standard next-token prediction is very tough and performance is limited to $\frac{1}{d}$ where $d$ is the degree of the graph $G_{d,l}$. 

\begin{table}[H]
\vskip 0.15in
\begin{center}
\begin{small}
\begin{sc}
\begin{tabular}{lccccc}
\toprule
  & $G_{2, 5}$ & $G_{2, 20}$ & $G_{5, 5}$ & $G_{10, 5}$ & $G_{20, 5}$ \\
\midrule
 GPT-Mini  & $49.8$ & $49.1$& $19.1$ & $8.1$ & $0.0 $\\
 \midrule 
 GPT2-Large & $48.9$ & $49.2$ & $19.4$ &$10.3$ & $3.5$\\
 \midrule
 Mamba & $48.5$ & $48.7$ & $20.2$ & $9.3$ & $0.0$\\
\bottomrule
\end{tabular}
\end{sc}
\end{small}
\end{center}
\caption{Autoregressive accuracies $\agacc(\llm)$ (in percent $\%$) for different types of graphs.}
\label{table:gen-acc}
\end{table}

Teacherless training on the other hand works very well with GPT2-Large, allowing it to solve most graph tasks perfectly. From-scratch models however also struggle to learn the task in this fashion (except for GPT-Mini on the simplest graph, $G_{2,5}$). 
\begin{table}[H]
\vskip 0.15in
\begin{center}
\begin{small}
\begin{sc}
\begin{tabular}{lcccccc}
\toprule
  & $G_{2, 5}$ & $G_{2, 10}$& $G_{2, 20}$ & $G_{5, 5}$ & $G_{10, 5}$ & $G_{20, 5}$ \\
\midrule
 GPT-Mini  & $99.9$ & $0.0$ & $0.0$& $0.0$ & $0.0$ & $0.0$\\
 \midrule 
 GPT2-L &$99.9$ & $98.8$ & $0.0$ & $99.0$ & $97.8$ & $0.0$ \\
 \midrule
 Mamba & $0.0$ & $0.0$ & $0.0$ & $0.0$ & $0.0$ & $0.0$ \\
\bottomrule
\end{tabular}
\end{sc}
\end{small}
\end{center}
\caption{Autoregressive accuracy $\dollaracc$ when using a teacherless response.} 
\label{table:dollar-accs}
\end{table}

Finally, reversing the sequence significantly simplifies the problem for all the models, allowing near perfect accuracies across all graphs. 

\begin{table}[H]
\vskip 0.15in
\begin{center}
\begin{small}
\begin{sc}
\begin{tabular}{lccccc}
\toprule
  & $G_{2, 5}$ & $G_{2, 20}$ & $G_{5, 5}$ & $G_{10, 5}$ & $G_{20, 5}$ \\
\midrule
 GPT-Mini  & $99.7$ & $99.8$& $100$ & $99.8$ & $0.0$\\
 \midrule 
 GPT2-Large & $99.9$ & $99.9$  & $99.6$ &  $99.8$ & $99.9$ \\
 \midrule
 Mamba & $98.5$ &$96.2$ & $99.1$ & $99.5$ & $0.0$ \\
\bottomrule
\end{tabular}
\end{sc}
\end{small}
\end{center}
\caption{Autoregressive accuracy $\reverseacc$ when reversing the response $\bm{r}$.}
\label{table:reverse-gen}
\end{table}

\subsection{Arithmetic Tasks}
\label{sec:arithmetic}
To further highlight that the identified failure modes are relevant for realistic tasks, we study the task of addition. We consider 3-digit addition, where samples are of the form
$$x_1x_2x_3 + y_1y_2y_3 = z_1z_2z_3z_4$$
We use the natural encoding (i.e. use the digits themselves as encodings) and pad shorter numbers with leading zeros to ensure fixed lengths. It has been observed in prior work \citet{lee23teaching,shen23positional} that reversing the result (i.e. $z_4z_3z_2z_1$) is very beneficial for training, leading to significantly more sample-efficient learning. Here we study if our teacherless training strategy can lead to similar gains over the standard encoding, i.e. we encode the inputs as 
$$x_1x_2x_3 + y_1y_2y_3 = \$\$\$\$$$
while keeping all other aspects of the training pipeline (such as the targets and the objective) the same as in standard training. We again train a \textit{GPT-Mini}-style transformer with \textit{AdamW} using a learning rate of $5$e-$4$. We display the resulting test accuracies for both standard, reversed and teacherless training in Fig.~\ref{fig:addition}, as a function of training iterations. Here we follow previous works and at every iteration, sample with replacement from all the possible $3$-digit configurations (aside from a test set put aside before). We can indeed see that teacherless training leads to more sample-efficient learning compared to the standard format, but to slightly less efficient learning compared to the reverse format. This again hints at the fact that some form of CleverHans cheating is picked up when learning addition. 

In this case though, it is more difficult to precisely characterize the form of the cheat in contrast to the path-finding problem. One possible form of cheating may be that in addition, knowing the leading digits of the answer can provide information about the subsequent digits. For example, when we add two single-digit numbers (picked uniformly at random), knowing that the $10'th$ place is a $1$ rather than a $0$, tells us that the $0$th place is more likely to be a smaller digit like $0$. 

\begin{figure}
    \centering
    \includegraphics[width=0.5\textwidth]{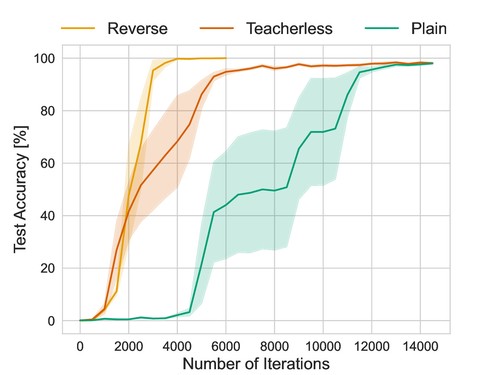}
    \caption{Test accuracies for 3-digit addition for standard, reversed and teacherless training.}
    \label{fig:addition}
\end{figure}

\subsection{Predicting Only the Indecipherable Token}

To study the difficulty of the first token in isolation from the Clever Hans cheat, we reduce the task to solely predicting this token, removing the rest of the path. The problem thus effectively becomes a classification problem. We use the same learning setup as for previous experiments and keep the same number of examples. We display the results in Table~\ref{table:only-first-token}. Our results indicate that the model fails to learn the first token in isolation.

This reinforces our understanding of the role played by the Clever Hans cheat in bringing about failure. One possibility could have been that the first node in isolation is easy to learn, but the Clever Hans cheat makes it harder somehow (e.g., it attracts the model to a bad local minimum). The other possibility (the one we put forth) is that the Clever Hans cheat reduces the problem to learning the first token in isolation, which we argued is hard. These experiments confirm the second possibility.

\begin{table}[H]
\vskip 0.15in
\begin{center}
\begin{small}
\begin{sc}
\begin{tabular}{lccccc}
\toprule
  & $G_{2, 5}$ & $G_{2, 20}$ & $G_{5, 5}$ & $G_{10, 5}$ & $G_{20, 5}$ \\
 \midrule 
 GPT2-Large & $50.2$ & $50.4$  & $18.9$ &  $10.4$ & $4.5$ \\
\bottomrule
\end{tabular}
\end{sc}
\end{small}
\end{center}
\caption{Accuracy when solely learning to predict the difficult token $r_1$}
\label{table:only-first-token}
\end{table}
\section{Other experimental details}

\subsection{Tokenization}
\label{sec:tokenizer}

We tokenize the graph in the following manner: (1) we first tokenize the randomly shuffled edge list as $``|v_1\hspace{1mm} v_2|v_3\hspace{1mm}v_4|..."$ where the first vertex in each edge is the one closest to $\start$, (2) then append start and goal node as $``/ \start\hspace{1mm}\goal=" $ and (3) then append the full path repeating start and goal node, $"\start\hspace{1mm}v_{i_1} \dots v_{i_{l-1}} \hspace{1mm} \goal"$. Note that (1) and (2) make up the prefix $\prefix$, which the model does not learn to predict. Then, (3) is the target sequence that the model aims to learn. The vocabulary size is thus given by $N+3$, where we add entries for the special tokens $``|"$, $``/"$ and $``="$. When using the pre-trained models GPT2 we use the tokenizer that was employed for pre-training, in this case the \textit{Byte-Pair tokenizer} \citep{Radford2019LanguageMA}.

\subsection{Models}

\label{sec:models}

 When training Transformer models from scratch, we use a small model consisting of $n_{\text{layers}}=12$ blocks with embedding dimension $e_{\text{dim}}=384$, $n_{\text{heads}}=6$ attention heads and MLP expansion factor $e=4$, coined \textit{GPT-Mini}. For pre-trained models, we consider GPT2-Large with $n_{\text{layers}}=36$, $e_{\text{dim}}=1280$, $n_{\text{heads}}=20$ and expansion factor $e=4$ \citep{Radford2019LanguageMA}. To further evaluate purely recurrent models, we perform experiments with the recent Mamba model \citep{gu2023mamba}. We train the Mamba models from scratch with $12$ layers and embedding dimension $784$. We train all the models with the \textit{AdamW} optimizer \citep{loshchilov2018decoupled}. For models trained from scratch we use a learning rate of $\eta =0.0005$ while for pre-trained models we use a smaller one of $\eta =0.0001$. In both cases we use weight decay of strength $0.01$. Models from scratch are trained for up to $500$ epochs in order to ensure convergence. Pre-trained models require less training time and we usually fit the training data perfectly after $10$ epochs.

\section{More Related Work}
\label{sec:more-related-work}

\textbf{Other arguments about next-token prediction.} We note that the works of \citet{kaariainen2006lower,ross10efficient}
 capture a stronger notion of snowballing, wherein, once an erroneuous sub-optimal action is committed, the model is more likely to commit more sub-optimal actions since it has wandered into territories that it was not trained on. Implicitly, the error here is not evaluated as an exact match of the response (i.e., $\response \neq \hat{\response}$) but as a cumulative notion of error over all steps (e.g., $\sum\mathbf{1}[\responsetoken_i \neq \hat{\responsetoken}_i]$). In this setting, there is an additional cause of failure called \textit{exposure bias}: the teacher-forced model has only been trained on correct trajectories, and has not learned how to {recover} from poor trajectories. Nevertheless, even this notion of snowballing assumes that that teacher-forcing has learned an accurate next-token predictor in the first place, which our failure mode challenges.

A closely-related criticism \citep{bubeck23sparks,dawid23latent,lecun24ar,du23lookahead} is that to model human thinking, we need to model two types of thinking as outlined in \citet{kahneman2011thinking}: a fast (System 1) thinking process that is also guided by a slower (System 2) thinking process. 
Theoretically, \citet{lin21limitations} show that there are formal languages for which expressing some next-tokens may require super-polynomial time or parameter count during \textit{inference}. These arguments however only suggest that some tokens require more computation; not that they are specifically problematic under left-to-right learning. However, \citet{du23lookahead} informally note that some next tokens can be hard to \textit{learn} as they require a global understanding of what will be uttered in the future (but see Remark~\ref{rem:local-unlearnable} below).

\begin{remark} 
\label{rem:local-unlearnable} \textbf{(Locally Unlearnable Token vs Indecipherable Token)}
We note that the ``locally unlearnable'' hypothesis of  \citet{du23lookahead} is related to, but not the same as the Indecipherable Token failure. The hypothesis in \citet{du23lookahead} is that when we learn tokens left-to-right, some tokens simply cannot be learned since crucial information becomes available only in subsequent tokens. This hypothesized failure may happen regardless of whether the supervision from subsequent tokens is lost to a Clever Hans cheat.   In contrast, in our path-star graph, the Indecipherable Token becomes unlearnable only because of the Clever Hans cheat. For example, 
 the (first) Indecipherable Token in the path-star problem \textit{is} locally learnable by the teacherless model (where, the crucial information is still only presented after this token). This token becomes unlearnable only  in the teacher-forced model where the Clever Hans cheat emerges. 
 \end{remark}

 We survey related arguments of next-token prediction, orthogonal to our main discussion regarding planning. \citet{zhu23physics,lv23reversal} report that language models that are trained on \texttt{A equals B} are unable to infer \texttt{B equals A}, which \citet{zhu23physics} suggest is due to autoregressive left-right training. \citet{du2023measure,welleck2020consistency} formalize the limitation that autoregressive models may potentially assign non-zero probability to infinite-length strings, thus leading to non-terminating inference. \citet{li24ntp} provide a Transformer-specific analysis of how self-attention affects the optimization geometry of next-token prediction. \citet{thrampoulidis2024implicit} provide an analysis of the implict bias of optimization with next-token prediction for linear models.

\textbf{Other limitations of Transformers} \citet{merrill2023parallelism} identify limitations of the representative power of Transformer architecture when the arithmetic precision is logarithmic in the number of input tokens.
\citet{bender21stochastic} criticize GPT-like language models as simply parroting out training data with minor stochasticity, while \citet{arkoudas2023chatgpt} report that such models struggle with reasoning, even if not a stochastic parrot.  \citet{inconsistencies23young} study masked language (T5, BERT) models (not causally-trained) and argue there are inconsistencies in the probabilities that they assign. E.g., when conditioned on `\texttt{white}', the probability of `\texttt{rice}' may be higher `\texttt{bread}' but the probability of `\texttt{white bread}' and `\texttt{white rice}' are the opposite. \citet{artetxe22role} empirically analyze the effect of bidirectional attention and bidirectional supervision (as in masked language modeling) during pretraining on the ability of the model to do various things, including next-token prediction. \citet{springer2024repetition} argue that autoregressive Transformers compute sub-optimal embeddings that can be improved by repeating the input text twice.

Finally, we note that \citep{ranaldi2023hans} use the term Clever Hans effect to denote how models can pick up spurious correlations between the position of a choice in a multiple-choice question, and the correctness of the answer. We note that the above correlation is inherent to the distribution, and independent of teacher-forcing. We distinguish this from the Clever Hans \textit{cheating} which happens under the guidance of teacher-forcing.

\textbf{End-to-end reasoning and chain-of-thought supervision. } 
In our path-star graph, learning the Indecipherable Token (the first node $\node_1$) can be thought of as a task whose end target is $\node_1$, but whose implicit intermediate targets (or ``chain-of-thought'') correspond to the unique path starting from $\goal$ headed towards $\node_1$ (although this is only provided as supervision after the first token). In this terminology, we can rephrase our claim as the model failing to learn the end target once the intermediate targets are lost to the Clever Hans Cheat. 

Such limits of end-to-end learning have been echoed in literature on learning with chain-of-thought-type supervision. Recent theoretical works have shown broad classes of tasks (e.g., any function efficiently computed by a Turing machine) where prepending CoT to the end target allows efficiently learning tasks; yet, there are ``multi-hop reasoning'' tasks that  are unlearnable end-to-end (i.e., without intermediate supervision) either due to computational hardness \citep{weis23subtask} or representational limits \citep{malach23auto}). Earlier theoretical works \citet{shwartz17failures,shwartz16endtoend} have similarly proven negative  results for end-to-end learning in similar settings. Similar empirical arguments have been made in neural network literature \citep{gulcehre16knowledge,glasmachers17endtoend} and also more recently, in language models
on complex reasoning and math problems \citep{nye21scratchpad,ling17induction,cobbe21training,
piekos20measuring,
zelikman22star,
recchia21teaching,
cobbe21training,
hsieh2023distilling,shridhar2022distilling}.

\begin{remark}\textbf{(Chain-of-thought before vs. after end target.)}
It is worth noting though that the above lines of work are concerned with chain-of-thought that is present before the end target; in our setup, this supervision is presented only \textit{after} the end target. Surprisingly, some of our teacherless models manage to utilize even such hindsight chain-of-thought.
 This success is not fully explained by existing positive results about chain-of-thought supervision, such as \citet{weis23subtask,malach23auto}, where supervision is provided \textit{before} the end target.
 \end{remark}

\textbf{Going beyond next-token prediction.} Inference-time  techniques like chain-of-thought \citep{reynolds2021prompt,wei22cot,kojima2022large} and its variants \citep{yao23tree,besta23graph,yao23react} or those that elicit feedback from the model \citep{madan23selfrefine,huang22internal,shinn2023reflexion} can be thought of as going beyond conventional form of inference by allowing the model to think more before producing its final answer. However, the backbone in these models are still trained by standard teacher-forcing. While other techniques \citep{burtsev2020memory,xue2023adaptive,goyal23think} train the model to explicitly think more, even these boil down to next-token prediction during training.

 One may argue that reinforcement learning-based training \citep{ranzato16seq,wu16nmt,bahdanau17actor,paulus18deep,ziegler19rl, stiennon20rl, ouyang19rl} is another way to build backbones that go beyond teacher-forcing. However, it is worth noting that the gradients in these techniques boil down to teacher-forcing on the model's own generated answer. Furthermore, if we desire that the model be able to generate a solution that can plan ahead of time, it is unclear how a model can go from a complete inability to plan (that may assign near-zero probability to the true plan in an exponential space of solutions), to discovering the correct plan simply through preference-based feedback (see \citep{havrilla2024teaching} for related empirical evidence). 

Another line of work --- spanning language \citep{bengio15scheduled,goyal16professor}, imitation learning \citep{ross11reduction,ross10efficient,ross14aggrevate} and structured prediction \citep{daume09structured,chang15lols} --- has been aimed at addressing the Snowball Failure, under the assumption that the model has otherwise learned an accurate next-step predictor. Broadly, the idea is to train the model on a mixture of the ground truth sequences and the model-generated sequences themselves, as a way to ensure that the test-time and training-time distributions are as similar as possible. These techniques however do not address the failure to learn a good next-step predictor in the first place.

As for reversal-based training, \citet{lee23teaching,shen23positional} observe that addition tasks become much simpler when the digits are reversed. Their argument is that this explicitly assists the model to learn a simpler algorithm. When it comes to natural language however, \citet{papadopoulos2024arrows} find that reversing hurts the model's perplexity.

\textbf{Predicting future tokens.} Some works \citep{gurnee10finding,meng22locating,pal23future} aim to recover future tokens  that an already-trained model may predict based on the internal layers of the current token. Note that the success of this does not imply that the model necessarily plans well. This only means that it is possible to recover what the already-trained model wants to generate in the future (which may simply be a bad plan). %
 \citet{pfau2023eliciting} train a language model to predict in reverse with the orthogonal goal of finding prefixes that elicit certain behaviors.  
 
 \textbf{Shortcut-learning in language models.} A line of work has empirically and theoretically analyzed how Transformer-based language models learn superficial shortcuts to (partially) solve tasks such as learning multiplication \citep{dziri23faith}, logic \citep{zhang23paradox}, automata \citep{liu23automata}, recursion \citep{inconsistencies23young}, reading comprehension \citep{mrc21lai} and multiple-choice questions \citep{ranaldi2023hans}
However, these shortcuts must \textit{not} be confused with the Clever Hans cheating induced by teacher-forcing as elaborated below.

\begin{remark} \label{rem:shortcuts} \textbf{(Difference between Clever Hans cheating and known shortcut-learning failures in Transformers.)} First, 
these aforementioned shortcuts exist independent of teacher-forcing:  these are correlations between the prefix (such as the initial digits of two multiplicands) and the final answer (the initial digits of the product) in the underlying training distribution. But Clever Hans cheats 
arise only upon teacher-forcing: 
 these are correlations between the prefixes of the answer itself to the rest of the answer. Second, the above shortcuts  only fail out-of-distribution (such as when the number of multiplied digits is increased, where the failure is in length generalization \citep{anil22length}).  In contrast, the Clever Hans cheat is more severe as it causes in-distribution failure. Thirdly, the aforementioned empirical observations are specific to Transformers, and the theoretical arguments rely crucially on properties of the Transformer (such as its non-recurrence and convolution, or its self-attention modules). Our argument however only relies on the teacher-forcing objective with no reliance on the Transformer architecture, and is demonstrated even for the recurrent Mamba architecture. 
\end{remark}

\end{document}